\definecolor{brightblue}{RGB}{224,40,255} 
\definecolor{darkgreen}{RGB}{30,120,52}
\newtheorem{theorem}{Theorem}
\theoremstyle{definition}
\theoremstyle{remark}
\title{A Generalized Information Bottleneck Theory of Deep Learning}
\author{
    Charles Westphal\textsuperscript{1,2}, Stephen Hailes\textsuperscript{1}, \& Mirco Musolesi\textsuperscript{1,2,3} \\
    \textsuperscript{1} Department of Computer Science, University College London \\
    \textsuperscript{2} Centre for Artificial Intelligence, University College London\\ \textsuperscript{3} University of Bologna, Bologna, Italy \\
    \texttt{\{charles.westphal.21,s.hailes,m.musolesi\}@ucl.ac.uk} 
}
\begin{document}

\maketitle

\begin{abstract}
The Information Bottleneck (IB) principle offers a compelling theoretical framework to understand how neural networks (NNs) learn. However, its practical utility has been constrained by unresolved theoretical ambiguities and significant challenges in accurate estimation.
In this paper, we present a \textit{Generalized Information Bottleneck (GIB)} framework that reformulates the original IB principle through the lens of synergy, i.e., the information obtainable only through joint processing of features. We provide theoretical and empirical evidence demonstrating that synergistic functions achieve superior generalization compared to their non-synergistic counterparts. Building on these foundations we re-formulate the IB using a computable definition of synergy based on the average interaction information (II) of each feature with those remaining. We demonstrate that the original IB objective is upper bounded by our GIB in the case of perfect estimation, ensuring compatibility with existing IB theory while addressing its limitations. 
Our experimental results demonstrate that GIB consistently exhibits compression phases across a wide range of architectures (including those with \textit{ReLU} activations where the standard IB fails), while yielding interpretable dynamics in both CNNs and Transformers and aligning more closely with our understanding of adversarial robustness.
\end{abstract}

\section{Introduction}
Deep learning has achieved remarkable practical success, yet our theoretical understanding of how neural networks learn effective representations remains incomplete \citep{shwartz2017opening}. Information theory offers a principled framework for analyzing deep learning, as information-theoretic quantities are invariant to invertible transformations and provide interpretable units of measurement \citep{cover1991elements}.\begin{wrapfigure}{r}{0.5\textwidth}
    \centering
    \includegraphics[width=\linewidth]{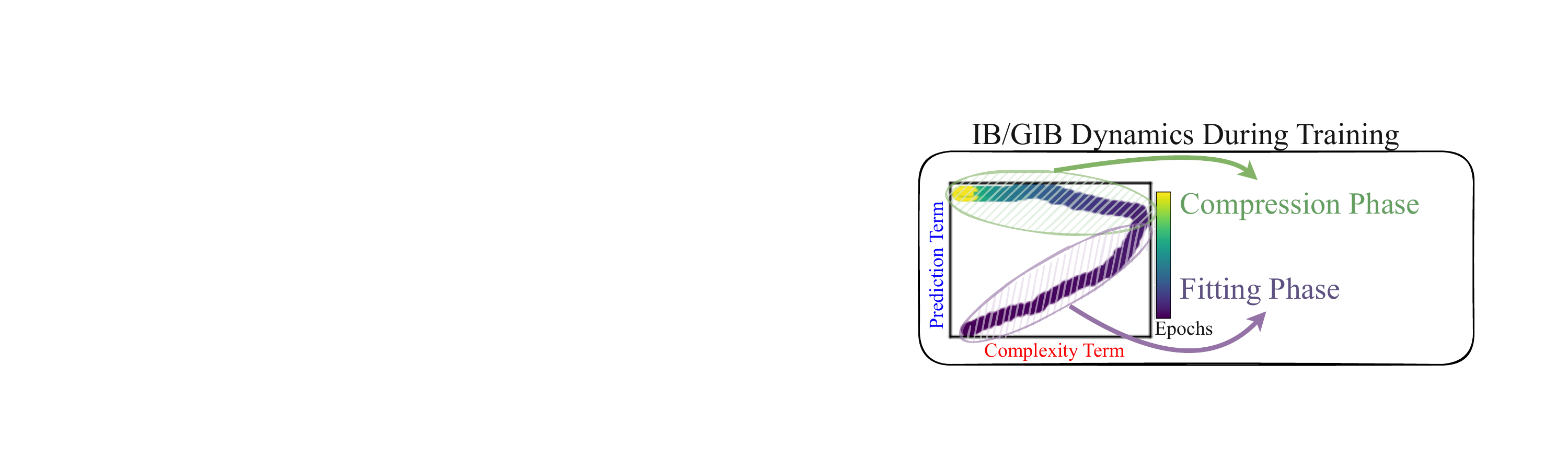}
    \vspace{-20pt} 
    \caption{This schematic illustrates information plane dynamics during training, with trajectories color-coded from early epochs (light colors) to late epochs (dark purple), showing distinct fitting and compression phases.}
    \label{fig:schematic}
    \vspace{-10pt}
\end{wrapfigure} The Information Bottleneck (IB) principle, introduced by \cite{tishby1999information}, has emerged as a particularly influential framework for understanding neural network learning dynamics, providing insights into diverse phenomena including adversarial robustness \citep{ma2021revisiting}, the effects of dropout regularization \citep{achille2017information}, and generalization bounds \citep{kawaguchi2023how}. Through this interpretation, the activations of our network can be viewed as a hidden state representation $\mathcal{T}$, which converges to a set of statistics captured by two competing terms. The first term, referred to as the 
\textit{prediction term}, quantifies the mutual information (MI) between the hidden representation $\mathcal{T}$ and the target $Y$, denoted as $I(Y;\mathcal{T})$\footnote{For the mathematical notation used throughout this paper, see Appendix \ref{app:notation}.}. It is straightforward to see that achieving training (and consequently test) accuracy above random guessing requires a network whose learned representation is well aligned with that of the target data. However, it is well established that optimizing solely for prediction accuracy can lead to overfitting. Consequently, the IB framework introduces a second term: the \textit{complexity term} $I(\mathcal{X};\mathcal{T})$, which quantifies the mutual information between the input data $\mathcal{X}$ and the hidden representation. Optimizing this less intuitive term can be interpreted as an effort to minimize redundant and irrelevant information from the input that is encoded in the latent space.

Overall, the IB framework posits that deep neural networks learn by solving the following Lagrangian optimization problem: 
\begin{equation}
\mathcal{L}_{\text{IB}} = \max_{p(\mathcal{T}|\mathcal{X})} \left[ \underbrace{\textcolor{blue}{I(\mathcal{T};Y)}}_{\textcolor{blue}{\text{prediction term}}} -  \underbrace{\textcolor{red}{\beta^{-1}I(\mathcal{X};\mathcal{T})}}_{\textcolor{red}{\text{complexity term}}} \right]
\end{equation}
\cite{shwartz2017opening} suggested that the success of deep learning can be attributed to the ability of NNs to perform the aforementioned optimization problem in two distinct phases. First, a \textit{fitting phase}, where both of the two introduced terms increase, and second a \textit{compression phase}, where the complexity term decreases in size (refer to Figure \ref{fig:schematic} for a visual illustration of these processes). It was argued that this second compression phase was unique to deep models and helped explain their generalizability. In some cases, the flow of information through the latent space has been shown to align precisely with the IB's theoretical predictions \citep{shwartz2017opening}.

While the IB framework initially seemed to provide a complete explanation of how neural networks balance compression and predictive accuracy, \cite{saxe2018information} presented counterexamples that challenge this view. In particular, the authors showed that compression phases depend critically on the choice of activation function: networks with \textit{tanh} activations exhibited compression across all layers, whereas \textit{ReLU}-based networks did not. Despite the absence of a compression phase, the \textit{ReLU} networks still generalized well. According to \cite{goldfeld2019estimating}, this occurs because the complexity term in deterministic networks is theoretically constant or infinite, rendering compression impossible. Consequently, the compression observed in \textit{tanh} networks was not a genuine information-theoretic effect, but rather the result of injected randomness \citep{saxe2018information,shwartz2017opening,Geiger2022InformationPlaneReview}.

In this paper, we address these issues by introducing a generalized formulation of the IB framework that is grounded in synergy. Synergy, a concept from multivariate information theory, captures the extra predictive power that arises when inputs are considered together rather than in isolation \citep{williams2010nonnegative}. To motivate this perspective, we begin by asking: why synergy? We then present both theoretical arguments and empirical results showing that synergistic functions lead to improved generalization.

Having established that synergistic functions generalize better, we construct the GIB by reformulating the IB through the lens of synergy. First, we introduce a point-wise mutual information (PMI)-based reweighting scheme that ensures we measure synergy specifically for correct predictions rather than arbitrary outputs. We then combine this reweighting with our feature-wise synergy decomposition, which uses the interaction information (II) to quantify information available only through joint processing of all features. Finally, we cast this as a Lagrangian optimization problem, yielding our GIB objective that measures how synergistically the input features combine to describe correct outputs.

After deriving the GIB, we prove that, under perfect estimation, it can be lower bounded by the IB. 
Importantly, our formulation overcomes key theoretical limitations of standard IB, including the issue of infinite complexity terms. We demonstrate that GIB exhibits clear compression phases and interpretable learning dynamics across a wide range of scenarios where standard IB fails. In Figure \ref{fig:activations}, we revisit the experiments presented in \cite{saxe2018information} and show that the GIB displays compression phases for five different activation functions while the IB is limited to one\footnote{\textbf{Reading Information Planes Plots.} Throughout, we visualize information dynamics using information plane plots. In these plots, the $x$-axis represents the complexity term and the $y$-axis represents the prediction term. For standard IB, these are $I(\mathcal{X};\mathcal{T})$ and $I(\mathcal{T};Y)$, respectively. For the full formulation of the GIB, see Section \ref{sec:gib}. Trajectories are color-coded by training epoch, progressing from early training (dark pink/blue) to late training (light green/yellow). Blue trajectories correspond to the standard IB dynamics, whereas pink trajectories depict the dynamics under our GIB formulation. Movement leftward indicates compression (reduction of redundant information), while movement upward indicates improved prediction. For the IB we only report the information plane of the final layer as this is where compression dynamics are most readily observed. Meanwhile, the GIB is formulated based on inputs and therefore only produces one information plane per training. For clarity of presentation, we normalize the complexity term and prediction term results between 0 and 1.}. Beyond these synthetic examples, we observe consistent information dynamics in practical deep learning settings including ResNets on CIFAR-10 and BERT fine-tuning. Furthermore, the complexity term in our framework provides meaningful insights into model behavior under adversarial attacks, correctly tracking vulnerability where standard IB fails. Code for full reproducibility of these results will be made publicly available upon publication.

\begin{figure*}[t]
    \centering
    \includegraphics[width=\textwidth]{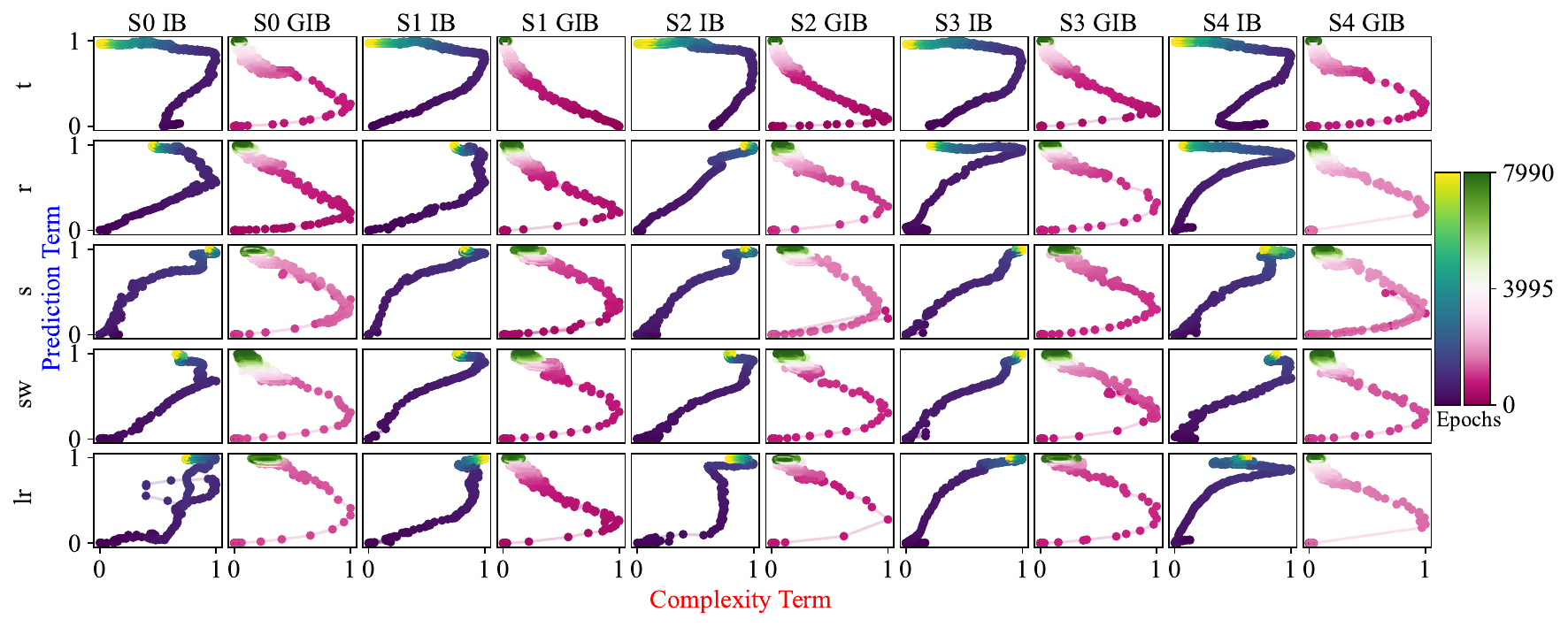}
    \vspace{-20pt}
    \caption{Information plane dynamics across multiple activation functions, extending \cite{shwartz2017opening} and \cite{saxe2018information} beyond \textit{tanh} and \textit{ReLU} to include \textit{softplus}, \textit{swish}, and \textit{leaky ReLU}. Standard IB (blue) shows compression only for \textit{tanh}; GIB (pink) shows compression for all activation functions. Each column represents one seed.}
    \label{fig:activations}
    \vspace{-10pt}
\end{figure*}
\section{Related Work}
\label{sec:rel_work}
\paragraph{MI Estimation.}
MI estimation in the IB framework remains an active area of research and debate. Numerous estimators exist (e.g., $k$-nearest-neighbor (kNN) methods \citep{kozachenko1987sample,kraskov2004estimating}, and kernel-density approaches \citep{kandasamy2015nonparametric}), and trainable neural estimators \citep{belghazi2018mine}), yet many information-theoretic studies of deep networks \citep{shwartz2017opening,saxe2018information} discretize neuron outputs (``binning'') to approximate MI. Binning is simple and fast, but even moderate coarse-graining can introduce substantial estimation error \citep{goldfeld2019estimating}. Despite these limitations, we use binning because MI estimates are needed throughout training (e.g., at each epoch); running kNN, KDE, or variational estimators at this frequency would be prohibitively slow and numerically unstable in high-dimensional settings. Our goal is to track relative trends in MI rather than obtain exact differential values, and binning provides a tractable, reproducible proxy that makes per-training-step MI monitoring feasible. 

\paragraph{Other Generalizations of the Information Bottleneck.}
There exist many extensions of the IB framework. For instance, the variational information bottleneck (VIB) introduces stochastic neural parameterizations to scale IB to deep networks \citep{alemi2017deep}, while Information dropout applies a neuron-wise IB-like penalty via multiplicative noise to improve generalization \citep{achille2017information}. The HSIC bottleneck \citep{wang2021revisiting} replaces mutual information with the Hilbert–Schmidt independence criterion to regularize intermediate representations for adversarial robustness. More recent work has extended the IB to multivariate and deep settings. Matrix-based R\'enyi's $\alpha$-order entropy functionals \citep{yu2019multivariate,yu2021measuring} provide estimators of multivariate entropies and dependence. These have been used to develop deterministic IB objectives in deep networks \citep{yu2021deepdib}, and to design gated IB objectives for sequential environments \citep{alesiani2023gated}. The multivariate IB of \citet{friedman2001multivariate} captures statistical dependencies between multiple bottleneck variables through graphical models. Most relevant to our work, recent advances have also connected IB and Partial Information Decomposition (PID), with \citet{kolchinsky2024redundancy} building on earlier IB work such as \citet{kolchinsky2019nonlinear}, showing that PID redundancy can be isolated via an IB-style optimization. Although these methods extend IB to multivariate settings and even show that certain PID quantities, such as redundancy, can be characterized via bottlenecks, none explicitly incorporate a synergy-specific information term into the IB formulation. Our approach achieves this, and we will demonstrate that it not only results in more consistent compression behavior but also offers theoretical advantages.

\paragraph{Synergy.} 
Synergy characterizes the additional information obtained by evaluating variables collectively rather than individually, quantifying how features interact to reduce uncertainty about a target. The characteristics of this relationship can be illustrated by means of the XOR function. Consider two binary string variables, $X_1$ and $X_2$, with $Z$ being their XOR output. In this scenario, $X_1$ and $Z$, as well as $X_2$ and $Z$, are uncorrelated ($I(X_1;Z) = I(X_2;Z) = 0$), but together, $X_1$ and $X_2$ fully describe $Z$ ($I(X_1, X_2;Z) = H(Z)$) \citep{guyon2003introduction,williams2010nonnegative}. While the concept is intuitive, its formalization has proven challenging, leading to multiple proposed measures. Early work by \citet{williams2010nonnegative} introduced PID, which decomposes MI into unique, redundant, and synergistic components. However, the number of terms in this decomposition equals the $n-1$'th Dedekind number, where $n$ is the number of features. This number is impractically large: a system with nine variables would require approximately $5 \times 10^{22}$ terms, while for ten variables, the Dedekind number remains unknown. Moreover, estimating these terms is subject to convergence issues and size limitations \citep{makkeh2018broja,makkeh2019maxent3d,pakman2021estimating}. While \citet{varley2022emergence} reduced the number of investigable quantities by averaging contributions of layers in the PID lattice, with different layers representing different levels of redundancy or synergy, calculations remained too complex for applications involving more than a few features. Alternative measures such as O-information \citep{rosas2019quantifying}, correlational importance \citep{nirenberg2003decoding}, and synergistic MI \citep{griffith2014quantifying} can estimate the synergy or redundancy of large sets of variables, but fail to reveal whether a specific feature interacts synergistically or redundantly. We resolve these issues by averaging the interaction information of each feature with those remaining:
\begin{equation}
\text{Syn}(\mathcal{X} \to Y) = I(\mathcal{X}; Y) - \frac{1}{N}\sum_{i=1}^{N} \left(I(\mathcal{X}^{-i}; Y) + I(X^i; Y)\right)\end{equation}
where $\mathcal{X}^{-i} = \mathcal{X} \setminus \{X^i\}$. This formulation captures how features collectively reduce uncertainty about the target $Y$, while maintaining computational feasibility by avoiding the exponential explosion of subset calculations \citep{westphal2025partial}.

\section{The Generalized Information Bottleneck}
\label{sec:gib}
We now introduce the GIB, by first showing that, given two functions with identical mutual information (MI) with noisy training data, the function exhibiting higher synergy achieves tighter generalization bounds. This result motivates the principle that synergistic functions generalize better than non-synergistic ones, and thus learning should favor synergy. At the same time, we emphasize that functions must also be correct. To capture this, we introduce a distribution that prioritizes accurate predictions. The final formulation of the GIB therefore maximizes the synergistic contribution of the inputs in describing this distribution.

\subsection{Synergy and Generalization}
\label{sec:syn}
In this subsection, we formally establish the connection between synergy and generalization. We begin by presenting theoretical results supported by experiments on synthetic data, and then extend the discussion to empirical findings on ResNets.
\begin{figure*}[t]
    \centering
    \includegraphics[width=\textwidth]{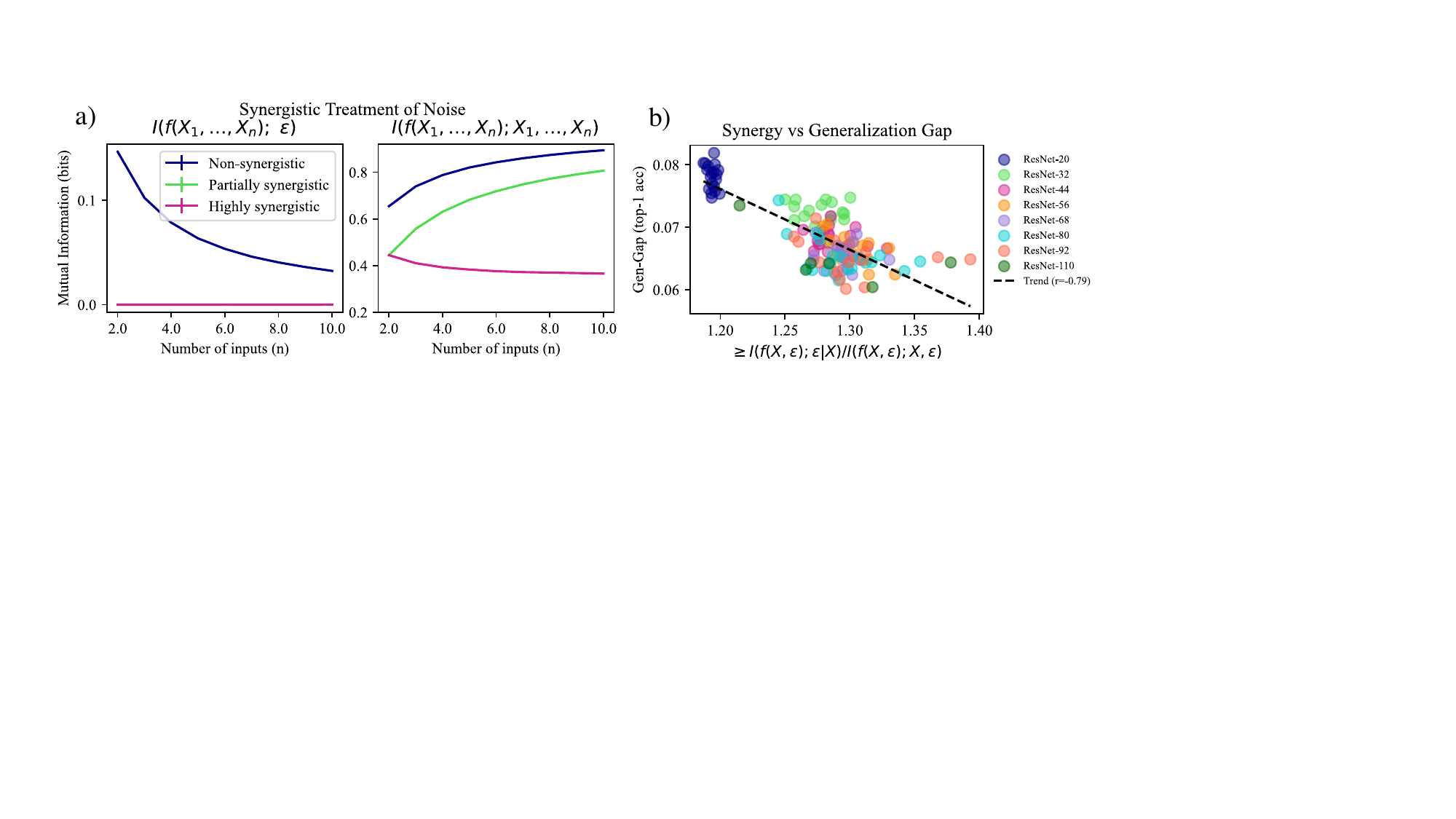}
    \vspace{-20pt}
    \caption{Synergistic processing of noise enhances generalization. (a) Controlled synthetic experiment demonstrating how synergy affects information flow (see Appendix \ref{app:synthetic} for details). Three functions of increasing synergy process binary inputs with noise: non-synergistic (blue), partially synergistic (green), and highly synergistic (magenta). Left: $I(f(X,\varepsilon); \varepsilon)$: more synergistic functions have lower dependence on noise. Right: $I(f(X,\varepsilon); X)$ - we observe that synergistic functions have lower MI with the input. (b) Empirical validation on CIFAR-10 using ResNets of varying depths (see Appendix \ref{app:cifar_synergy}). We quantify synergistic interactions between inputs and noise as $I(f(X,\varepsilon); \varepsilon|X) / I(f(X,\varepsilon); X,\varepsilon)$. Higher synergy correlates with smaller generalization gaps.}
    \vspace{-10pt}
    \label{fig:synergy}
\end{figure*}
\subsubsection{Theoretical Evidence}
Let us first suppose we have some noise $\varepsilon$ that can be considered independent of our input data $\mathcal{X}$. Now consider two functions $\textup{s}^-$ and $\textup{s}^+$. If $\textup{s}^+$ combines the independent components of its arguments in a more synergistic manner than $\textup{s}^-$, by definition we have: 
\begin{equation}
\begin{split}
\label{eqn:syn}
    I(\textup{s}^+(\mathcal{X},\varepsilon);\mathcal{X},\varepsilon)& - I(\textup{s}^+(\mathcal{X},\varepsilon);\varepsilon) - I(\textup{s}^+(\mathcal{X},\varepsilon);\mathcal{X}) > \\ & I(\textup{s}^-(\mathcal{X},\varepsilon);\mathcal{X},\varepsilon) - I(\textup{s}^-(\mathcal{X},\varepsilon);\varepsilon) - I(\textup{s}^-(\mathcal{X},\varepsilon);\mathcal{X}).
\end{split}
\end{equation}
If we now assume that $I(\textup{s}^+(\mathcal{X},\varepsilon);\mathcal{X},\varepsilon) = I(\textup{s}^-(\mathcal{X},\varepsilon);\mathcal{X},\varepsilon)$ (which can crudely be thought of as approximately equal train accuracies) then it must be true that:
\begin{equation}
\begin{split}
\label{eqn:syn_gen}
     I(\textup{s}^+(\mathcal{X},\varepsilon);\varepsilon) + I(\textup{s}^+(\mathcal{X},\varepsilon);\mathcal{X}) <  I(\textup{s}^-(\mathcal{X},\varepsilon);\varepsilon) + I(\textup{s}^-(\mathcal{X},\varepsilon);\mathcal{X})
\end{split}
\end{equation}
where $I(\textup{s}(\mathcal{X},\varepsilon);\varepsilon)$ represents the MI between the output of a function and the noise, while $I(\textup{s}(\mathcal{X},\varepsilon);\mathcal{X})$ describes the information shared between the output and uncorrupted input. In Figure \ref{fig:synergy}(a) we analyze the implications of Equation \ref{eqn:syn_gen} via synthetic data. We show that more synergistic functions for the same complexity of input and output have lower values of both $I(\textup{s}(\mathcal{X},\varepsilon);\varepsilon)$ and $I(\textup{s}(\mathcal{X},\varepsilon);\mathcal{X})$. This is favorable as both of these terms are known to be inversely related to generalization capabilities, as discussed below.
\paragraph{How $I(\textup{s}(\mathcal{X},\varepsilon);\mathcal{X})$ Impedes Generalizability.} This quantity can be re-written as the complexity term of the IB, reducing its value has repeatedly been shown to be related to compression and generalization \citep{tishby1999information,shwartz2017opening}. High values of this term ensure a latent representation that has memorized irrelevant and redundant information in the input. Furthermore, recent work has formally related this quantity to generalization bounds \citep{kawaguchi2023how}.
\paragraph{How $I(\textup{s}(\mathcal{X},\varepsilon);\varepsilon)$ Impedes Generalizability.} 
The relationship between noise sensitivity and generalization is fundamentally tied to function smoothness. Most generalization bounds require that the learned function be \emph{Lipschitz smooth}, meaning there exists a constant $L$ such that $\|f(x_1) - f(x_2)\| \leq L\|x_1 - x_2\|$ for all inputs. This constraint ensures the function's output changes at most proportionally to input perturbations. When a function has high mutual information with noise $I(\textup{s}(\mathcal{X},\varepsilon);\varepsilon)$, it indicates the output varies significantly with small noise perturbations, implying a large Lipschitz constant. As shown by \cite{bartlett2017spectrally} and \cite{neyshabur2017exploring}, generalization bounds scale with the Lipschitz constant of neural networks, which can be bounded by the product of layer-wise spectral norms. Therefore, functions with lower $I(\textup{s}(\mathcal{X},\varepsilon);\varepsilon)$ exhibit smaller Lipschitz constants and tighter generalization bounds, explaining why synergistic functions that minimize noise sensitivity achieve superior generalization.
\subsubsection{Empirical Evidence}
To empirically validate our theoretical findings, we conducted experiments examining how synergistic processing of noise affects generalization in deep NNs. We trained ResNet models of varying depths (20, 32, 44, 56, 68, 80, 92, and 110 layers) on CIFAR-10 with standard data augmentations. To quantify synergy with augmentation noise, we developed a novel teacher-student framework: a teacher model trained with augmentations (random crops and horizontal flips) teaches a student model to predict its outputs from non-augmented inputs. The cross-entropy loss achieved by the student provides a maximal upper bound for the proportion of information between inputs and outputs that cannot be explained without considering the interaction of noise and features, formally: $I(f(X,\varepsilon); \varepsilon|X) / I(f(X,\varepsilon); X,\varepsilon)$.

Our results, shown in Figure \ref{fig:synergy}(b), reveal a strong negative correlation (Pearson $r = -0.79$, $p < 0.001$) between this synergy measure and generalization performance across all model configurations. Models with higher synergy (those whose predictions depend more on the interaction between image content and augmentation patterns) consistently achieve smaller generalization gaps. This confirms our theoretical prediction: synergistic processing of augmentation noise, rather than treating it as independent corruption, enables models to extract more robust features that generalize better to clean test images. Full experimental details are provided in Appendix \ref{app:cifar_synergy}. Considering that to synergistically process noise, we must synergistically process the features, we design our GIB principle based on measures of feature synergy.

\subsection{Formulating the GIB Principle}
During the last section, we argued that synergistic functions generalize better than their non-synergistic counterparts. Consequently, we argue that when learning, a deep network should aim to maximize the synergy of the inputs to produce the outputs. However, this is not a strict enough condition, because there are many different synergistic functions, most of which are irrelevant to the task at hand. We instead want to measure how synergistically our inputs combine to give the \emph{correct} outputs.

To facilitate this, we take the following two steps. First, our prediction term will solely measure the MI between our predictions and targets. Second, our complexity term will be a function of a new distribution that describes the co-occurrences of $Z$ with $Y$. The exact definition of $Q$ is based on PMI-based reweighting, i.e., weighting samples by the likelihood ratio between the joint distribution and the product of marginals: $Q(Z,Y) = \frac{P(Z,Y)}{P(Z)P(Y)}$. This reweighting scheme emphasizes patterns where $Z$ and $Y$ co-occur more frequently than would be expected under independence, effectively highlighting the meaningful dependencies between our learned representations and the target outputs. PMI has proven effective in capturing meaningful associations in numerous ML contexts: it underlies word2vec's implicit matrix factorization \citep{levy2014neural}, drives contrastive learning objectives \citep{vandenoord2018}, and measures feature relevance in interpretable ML \citep{bouma2009normalized}. By sampling from the distribution $Q(Z,Y) = \frac{P(Z,Y)}{P(Z)P(Y)}$ we obtain the random variable $Q$. Combining these steps with how we earlier defined synergy, we get the following formulation of the GIB:
\begin{align}
\mathcal{L}_{\text{GIB}} &= \max_{p(Z|X)} \left[ \underbrace{\textcolor{blue}{I(Z;Y)}}_{\textcolor{blue}{\text{prediction term}}} - \underbrace{\textcolor{red}{\frac{1}{2\beta N} \sum_{i=1}^{N} \left(I(\mathcal{X}^{-i}; Q) + I(X^i; Q)\right)}}_{\textcolor{red}{\text{complexity term}}} \right]
\end{align}
The prediction term (blue) $I(Z;Y)$ measures the mutual information between the model outputs $Z$ and the labels $Y$, capturing how well the predictions align with the true targets. The complexity term (red) $\frac{1}{2\beta N} \sum_{i=1}^{N} \left(I(\mathcal{X}^{-i}; Q) + I(X^i; Q)\right)$ inversely quantifies the average information obtainable from individual features or their complements about the PMI-reweighted distribution $Q(Z,Y)$, which emphasizes correct predictions. By maximizing their difference, GIB measures information dynamics that emerge only from collective feature interactions, which our analysis also indicates leads to improved generalization. On the other hand, measuring synergy can be computationally demanding, as we discuss in Appendix \ref{app:comp_comp}.

\section{Relating the GIB to the IB}
In this section, we first prove that under a simple assumption (i.e., perfect estimation) the IB is a lower bound of our GIB. Finally, we discuss how the GIB solves longstanding IB issues.

\begin{theorem}
If we assume perfect training accuracy and therefore $Q = Z = Y$, then the original IB objective is upper bounded by our GIB:
\begin{equation}
I(\mathcal{T}; Y) - \beta I(\mathcal{X}; \mathcal{T}) \leq I(Z;Y) - \frac{1}{2\beta N}\sum_{i=1}^{N} \left(I(\mathcal{X}^{-i}; Q)) + I(X^i; Q))\right)
\end{equation}
\end{theorem}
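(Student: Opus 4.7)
The plan is to reduce both sides to functions of $Y$ alone using the stated assumption, and then verify the two halves of the Lagrangian separately by repeated use of the data processing inequality (DPI).

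First, I would invoke the hypothesis. Under perfect training accuracy $Z=Y$, so $P(Z,Y)$ is concentrated on the diagonal and the PMI reweighting collapses $Q(Z,Y)$ to $Y$. The RHS simplifies to
\[
I(\mathcal{X};Y)-\frac{1}{2\beta N}\sum_{i=1}^{N}\bigl(I(\mathcal{X}^{-i};Y)+I(X^i;Y)\bigr).
\]
Perfect accuracy also forces $Y$ to be a deterministic function of $\mathcal{T}$ (since $Z$ is read off from $\mathcal{T}$), so I have two Markov chains to exploit: $Y\to\mathcal{X}\to\mathcal{T}$ (because $\mathcal{T}$ is sampled from $p(\mathcal{T}\mid\mathcal{X})$) and $\mathcal{X}\to\mathcal{T}\to Y$ (because $Y=g(\mathcal{T})$ for some $g$).

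Next, I would split the inequality into a prediction part and a complexity part. DPI applied to the first chain yields $I(\mathcal{T};Y)\le I(\mathcal{X};Y)$, so the prediction term on the LHS is already dominated by that on the RHS. For the complexity part it suffices to show
\[
\frac{1}{2\beta N}\sum_{i=1}^{N}\bigl(I(\mathcal{X}^{-i};Y)+I(X^i;Y)\bigr)\;\le\;\beta\,I(\mathcal{X};\mathcal{T}).
\]
Two further DPI steps do the work: each $I(\mathcal{X}^{-i};Y)$ and $I(X^i;Y)$ is bounded by $I(\mathcal{X};Y)$ because marginalizing out features cannot increase MI, so the whole sum is at most $2N\cdot I(\mathcal{X};Y)$; and $I(\mathcal{X};Y)\le I(\mathcal{X};\mathcal{T})$ via the second Markov chain. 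Plugging these in reduces the required bound to $\tfrac{1}{\beta}I(\mathcal{X};\mathcal{T})\le \beta\,I(\mathcal{X};\mathcal{T})$, i.e.\ $\beta\ge 1$.

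The main obstacle is this residual factor-of-$\beta^{2}$ gap. The IB penalty carries multiplier $\beta$ while the GIB penalty carries $1/(2\beta N)$, so the crude termwise DPI bound only closes cleanly when $\beta\ge 1$. I would handle this by either (i) stating $\beta\ge 1$ as a mild standing assumption, which matches the IB regime of interest (large trade-off multiplier), or (ii) replacing the crude termwise DPI bound with a tighter one, for instance by invoking subadditivity or the chain rule for mutual information to absorb the stray factor of $N$ that makes the bound loose. Either refinement leaves the backbone of the argument intact: the simplification via $Q(Z,Y)=Y$ followed by two applications of DPI on the prediction term and one on each summand of the complexity term.
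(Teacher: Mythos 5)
Your proof follows essentially the same route as the paper's: collapse $Q(Z,Y)$ to $Y$, bound the prediction term via DPI on $Y\to\mathcal{X}\to\mathcal{T}$, bound each of the $2N$ complexity summands by a single quantity so that the $\tfrac{1}{2\beta N}$ prefactor cancels to $\tfrac{1}{\beta}$, and finish with one more DPI step (the paper bounds each summand by $H(Z)=I(\mathcal{X};Z)$ using determinism of the predictor, you bound it by $I(\mathcal{X};Y)$; both land at $\tfrac{1}{\beta}I(\mathcal{X};\mathcal{T})$). The ``factor-of-$\beta^{2}$ gap'' you flag is an inconsistency in the theorem statement itself rather than in your argument: the paper's IB Lagrangian and its own appendix proof both carry the complexity term with coefficient $\beta^{-1}$, and the proof establishes exactly $I(\mathcal{T};Y)-\tfrac{1}{\beta}I(\mathcal{X};\mathcal{T})\le\mathcal{L}_{\text{GIB}}$, so for the intended statement no auxiliary assumption such as $\beta\ge 1$ is needed.
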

The proof is provided in Appendix \ref{app:pot1}. This result demonstrates that the GIB provides an upper bound on the IB objective. Consequently, as we optimize the traditional IB to find sufficient statistics, we simultaneously optimize our GIB objective, ensuring that our approach remains compatible with the theoretical foundations of the IB. For instance, in Appendix \ref{app:suf_stat} we prove the GIB discovers sufficient statistics.

This new formulation overcomes two main limitations of the original IB. First, the partition across subsets of features combined with the PMI definition of $Q(Z,Y)$ protects the compression term from becoming infinite. In Appendix \ref{app:infinite_gib} we prove that the GIB is only infinite under interpretable circumstances. Second, and more fundamentally, our formulation explicitly considers over-reliance on individual features, as explained in Section \ref{sec:syn}. IB optimizes the total information flow between inputs and outputs through the latent representation $\mathcal{T}$ without considering how features interact. In contrast, GIB explicitly models how inputs combine to form the latent representation, distinguishing between different types of feature interactions. This is evident in the complexity terms: IB's $\beta^{-1}I(\mathcal{X};\mathcal{T})$ aggregates all information equally, while GIB's synergistic decomposition $\frac{1}{2\beta N} \sum_{i=1}^{N} \left(I(\mathcal{X}^{-i}; Q) + I(X^i; Q)\right)$ penalizes the information contained in individual features. Consequently, IB compresses indiscriminately, whereas GIB selectively preserves long-range feature combinations; the synergistic patterns we have shown lead to better generalization. 


\section{Experimental Case Studies}
This section presents experimental evidence demonstrating GIB's advantages over standard IB across diverse settings. We show that GIB provides more consistent and interpretable information dynamics throughout training, successfully capturing compression phases where standard IB fails (refer to Footnote 2 on reading information planes). Additionally, we demonstrate that GIB's complexity term serves as a direct indicator of adversarial vulnerability, providing quantitative insights into model robustness that standard IB cannot capture. As stated in Section \ref{sec:rel_work}, all MIs will be estimated using binning. However, in Appendix \ref{app:alt_estimation}, we reproduce our results with a different method of MI estimation. 

\subsection{Information Dynamics of MLPs Learning Simple Functions}
We examine NNs learning five mathematical functions of increasing complexity: addition, multiplication, and three symmetric polynomials labelled f1, f2 and f3 (polynomials in which all arguments are subjected to the same operations). Full experimental details are in Appendix \ref{app:simple_functions}. 

The information plane dynamics in Figure \ref{fig:simple_functions} show clear differences between the standard IB and GIB formulations. For the GIB, we observe compression phases, characterized by leftward movement during training, across all five functions and random seeds. The trajectories initially move upward and rightward as networks fit the training data, then shift leftward as training progresses. The standard IB displays more variable behavior, without real indication of compression, despite the strong generalization capabilities of these networks.
\begin{figure*}[t]
    \centering
    \includegraphics[width=\textwidth]{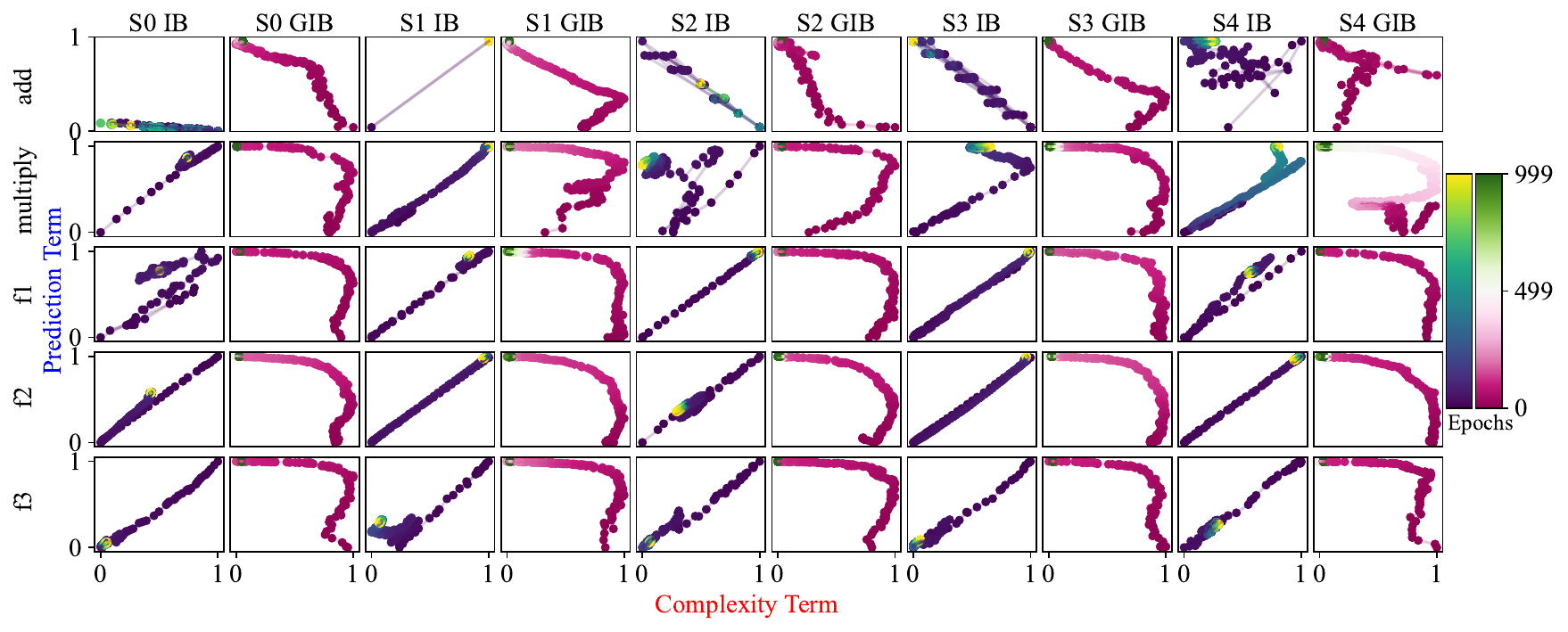}
    \vspace{-20pt}
    \caption{Information plane dynamics for NNs learning simple mathematical functions. Comparison of standard IB versus GIB across five functions (rows) and five random seeds. Functions include basic arithmetic and symmetric polynomials. GIB consistently shows compression phases (leftward movement), while standard IB exhibits varied behaviors. See Appendix \ref{app:simple_functions} for experimental details.}
    \vspace{-10pt}
    \label{fig:simple_functions}
\end{figure*}
\subsection{Information Dynamics of ResNets}
\label{sec:planes_resnets}
We analyze information dynamics in residual networks (ResNets) of varying depths (20, 56, 80, 110 layers) trained on CIFAR-10. For the standard IB, we compute MI using the 10-dimensional output layer directly. For GIB, due to the need to compute feature-wise decompositions on the high-dimensional input space (3072 dimensions), we first apply Kernel PCA to reduce the pixel space to 50 principal components before computing MI, as explained in more detail in Appendix \ref{app:resnets} \citep{TurkPentland1991CVPR}. For more details as to why we chose 50 components, see Appendix \ref{app:pca}.

Figure \ref{fig:resnets} displays information plane trajectories for ResNets trained on CIFAR-10. The GIB formulation shows consistent compression and fitting for all network depths and random seeds, though the dynamics vary with architecture size. In smaller networks (ResNet-20), trajectories show a general trend of increasing prediction term while the complexity term decreases throughout training. Larger networks (ResNet-56 and above) begin to show phase structure.

The standard IB presents markedly different dynamics. Rather than showing clear phases, IB trajectories remain largely clustered with minimal compression across epochs. The absence of compression phases in standard IB holds across all tested architectures, confirming previous observations that \textit{ReLU} networks fail to exhibit expected IB behavior \citep{saxe2018}.
\begin{figure*}[t]
    \centering
    \includegraphics[width=\textwidth]{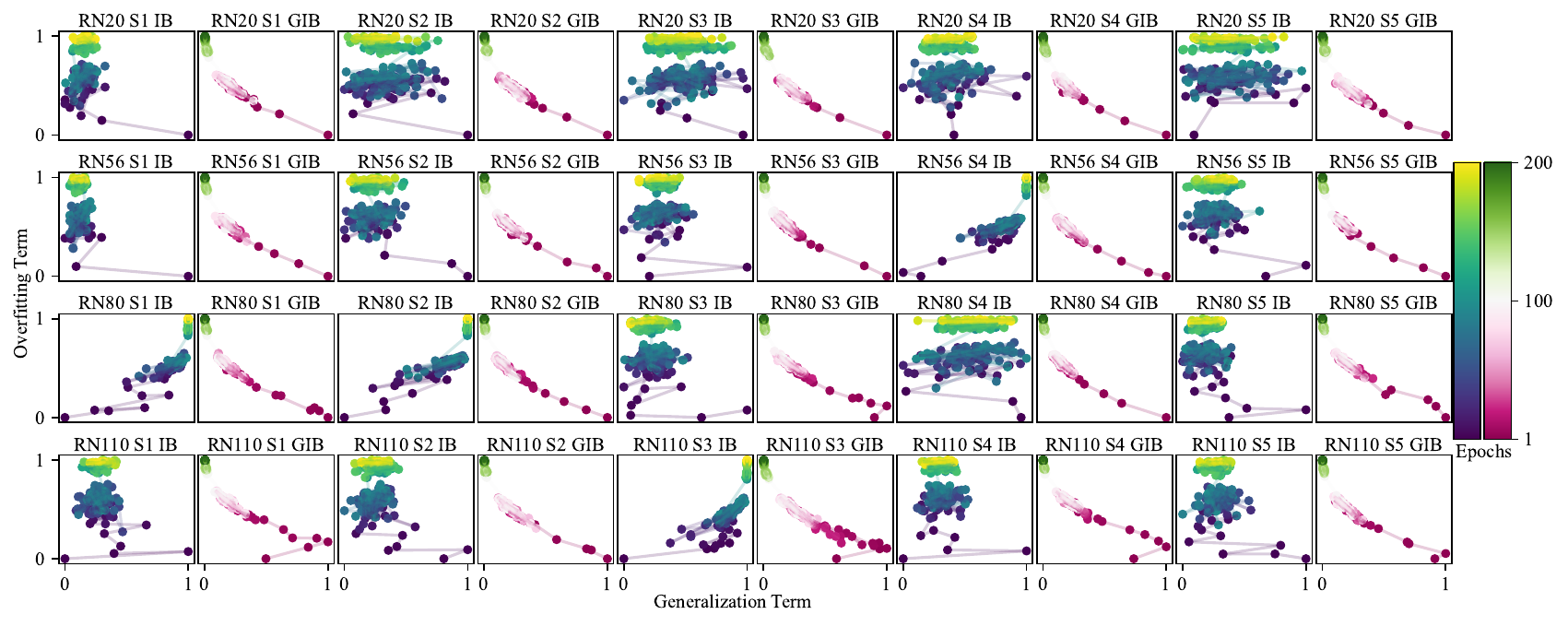}
    \vspace{-20pt}
    \caption{Information plane dynamics for ResNets of varying depths trained on CIFAR-10. Comparison across four network depths and five random seeds. GIB consistently exhibits compression phases, while standard IB shows limited or no compression. See Appendix \ref{app:resnets} for details.}
    \vspace{-10pt}
    \label{fig:resnets}
\end{figure*}
\subsection{Information Dynamics of Transformers Classifying News Headlines}
We examine BERT-base fine-tuned on AG News text classification, comparing standard fine-tuning with a novel ``unlearning'' initialization strategy. In this case, unlearning involves training on random labels to remove biases from the model. For the standard IB, we again set $\mathcal{T}$ as the final layer representation for use in MI calculations. For GIB, we apply our feature-wise synergy decomposition to the full set of inputs. Full experimental details are in Appendix \ref{app:transformers}.

The standard fine-tuning approach shown in row 1 of Figure \ref{fig:transformers} produces highly variable trajectories that begin near the center of the information plane, indicating that pre-trained BERT representations contain substantial pre-training biases. This prompted us to unlearn, where we train on random labels. The unlearning intervention dramatically alters these dynamics. After 3 epochs of random label training, models consistently start from the bottom-right corner of the information plane, as shown in row 2. From this reset position, both IB and GIB show more coherent learning trajectories during subsequent fine-tuning. This reveals how studying these information-planes can be used for diagnostics and interpretation.
\begin{figure*}[t]
    \centering
    \includegraphics[width=\textwidth]{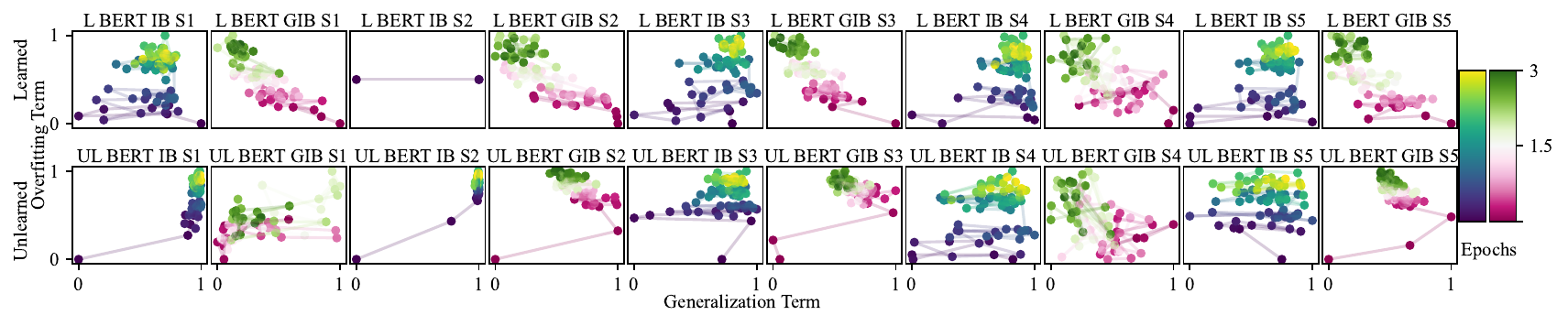}
    \vspace{-20pt}
    \caption{Information plane dynamics for BERT fine-tuned on AG News. Comparison of standard fine-tuning (top) versus unlearning + fine-tuning (bottom). The unlearning procedure repositions models to a more favorable initialization point for subsequent learning. See Appendix \ref{app:transformers} for details.}
    \vspace{-10pt}
    \label{fig:transformers}
\end{figure*}
\subsection{Adversarial Robustness}
We investigate how adversarial perturbations affect information dynamics by training NNs with \textit{tanh} activations on MNIST under Fast Gradient Sign Method (FGSM) attacks of varying strength. Full details are in Appendix \ref{app:adversarial}.
Figure \ref{fig:adversarial}(a) illustrates the effect of adversarial attacks on learning dynamics. Networks trained under weak attacks ($\epsilon = 0.01$) exhibit normal convergence, whereas strong attacks ($\epsilon = 1.0$) substantially hinder the learning process. The information-theoretic analysis in Figure \ref{fig:adversarial}(b) exposes a critical difference between standard IB and our GIB formulation. The GIB's complexity term faithfully reflects the degradation in generalization: values remain high for $\epsilon = 1.0$ (poor generalization), while decreasing rapidly when proper training occurs. In contrast, the standard IB's complexity term shows minimal differentiation between attack strengths.
\begin{figure*}[t]
    \centering
    \includegraphics[width=\textwidth]{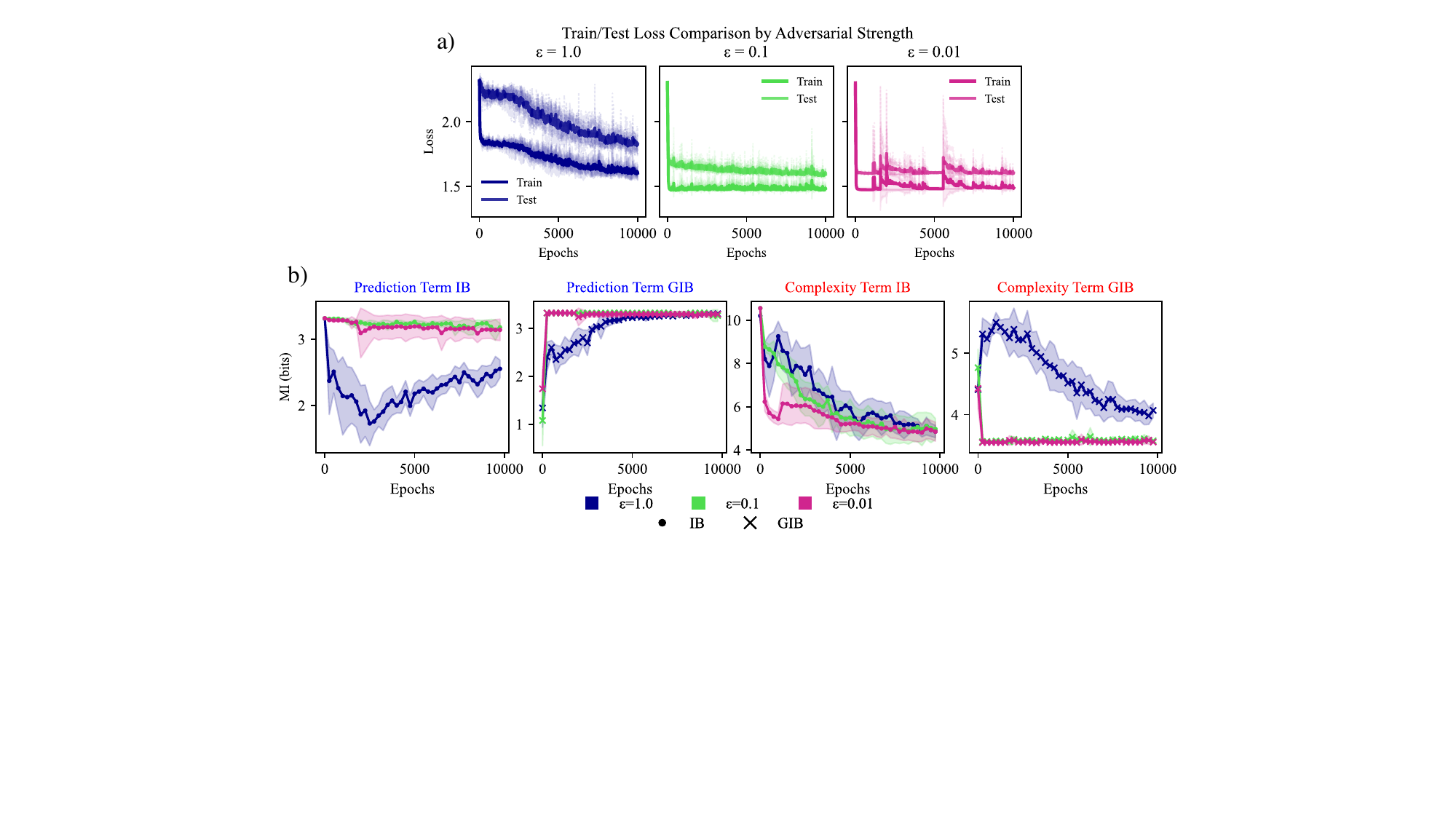}
    \vspace{-20pt}
    \caption{Information dynamics under adversarial attacks on MNIST. (a) Training dynamics for three FGSM attack strengths. (b) Information-theoretic analysis comparing IB versus GIB. GIB's complexity term correctly reflects degradation in generalization, while standard IB fails to differentiate between attack strengths. See Appendix \ref{app:adversarial} for details.}
    \vspace{-10pt}
    \label{fig:adversarial}
\end{figure*}
\section{Conclusion}
In this work, we have introduced the Generalized Information Bottleneck (GIB), a principled reformulation of the IB that explicitly accounts for synergistic interactions between features. Our theoretical and empirical analysis demonstrated that synergistic functions achieve better generalization, providing a fundamental justification for why deep networks should learn representations that combine inputs synergistically rather than processing them independently. The GIB framework addresses several longstanding limitations of the original IB. First, we proposed a PMI-based reweighting scheme $Q(Z,Y)$ that protects the compression term from becoming infinite during training. Second, we have introduced a feature-wise synergy decomposition, explicitly penalizing representations that rely too heavily on individual features or simple feature combinations. This ensures that our formulation highlights when networks learn patterns that emerge solely through the collective processing of multiple inputs, a distinction that is crucial for robust generalization. Our experimental results across diverse architectures demonstrate that GIB provides a more complete picture of how deep networks process information. The GIB framework opens new possibilities for both understanding and improving deep learning systems.





\bibliography{iclr2024_conference}
\bibliographystyle{iclr2025_conference}

\newpage

\appendix
\section{Notation Table}
\label{app:notation}
Table~\ref{tab:notation} contains all the mathematical notation used in this paper.
\begin{table}[ht]
\centering
\caption{Summary of Notational Conventions.}
\label{tab:notation}
\begin{tabular}{@{}ll@{}}
    \toprule
    \textbf{Symbol} & \textbf{Description} \\ \midrule
    $\mathcal{X}$ & Set of input random variables (features) \\
    $X^i$ & $i$-th individual input feature \\
    $\mathcal{X}^{-i}$ & All features except the $i$-th feature: $\mathcal{X} \setminus \{X^i\}$ \\
    $Y$ & Target random variable (labels) \\
    $Z$ & Output/prediction random variable \\
    $\mathcal{T}$ & Hidden representation/latent space \\
    $\varepsilon$ & Noise random variable \\
    $N$ & Number of input features \\
    $\beta$ & Trade-off parameter in IB formulation \\
    $I(\cdot;\cdot)$ & Mutual information \\
    $H(\cdot)$ & Entropy \\
    $H(\cdot|\cdot)$ & Conditional entropy \\
    $P(\cdot)$ & Probability distribution \\
    $Q(Z,Y)$ & PMI-based reweighted distribution \\
    $Q$ & Variable sampled from $Q(Z,Y)$ \\
    $\text{Syn}(\cdot)$ & Synergy measure \\
    $\textup{s}^+, \textup{s}^-$ & More/less synergistic functions \\
    IB & Information Bottleneck \\
    GIB & Generalized Information Bottleneck \\
    MI & Mutual Information \\
    NN & Neural Network \\
    PMI & Point-wise Mutual Information \\
    PID & Partial Information Decomposition \\
    FGSM & Fast Gradient Sign Method \\
    \bottomrule
\end{tabular}
\end{table}
\vspace{-1cm}
\section{Proof of Theorem 1}
\label{app:pot1}
In this section we prove Theorem 1. To do this, we assume that we have perfect training performance and therefore $Q(Y,Z) = Y = Z$. We also assume the predictor is deterministic given its input (as in a standard feed-forward network), hence $H(Z|\mathcal{X})=0$ and therefore
\begin{equation}
I(\mathcal{X};Z)=H(Z).
\label{eq:det-eq}
\end{equation}
\begin{proof}
Given this, our GIB formulation becomes:
\begin{align}
\mathcal{L}_{\text{GIB}} &= I(Z;Y) - \frac{1}{2\beta N}\sum_{i=1}^{N} \left(I(\mathcal{X}^{-i}; Q) + I(X^i; Q)\right) \\
&= H(Y) - \frac{1}{2\beta N}\sum_{i=1}^{N} \left(I(\mathcal{X}^{-i}; Z) + I(X^i; Z)\right) \quad \text{(since $Z=Y$ and $Q=Z=Y$)} \\
&\geq I(\mathcal{X}; Y) - \frac{1}{2\beta N}\sum_{i=1}^{N} \left(I(\mathcal{X}^{-i}; Z) + I(X^i; Z)\right) \quad \text{(since $I(\mathcal{X};Y) \le H(Y)$)} \\
&\geq I(\mathcal{X}; Y) - \frac{1}{2\beta N}\sum_{i=1}^{N} \left(H(Z) + H(Z)\right) \quad \text{(by $I(A;B)\le H(B)$)} \\
&= I(\mathcal{X}; Y) - \frac{1}{2\beta N} \cdot N \cdot 2H(Z) \\
&= I(\mathcal{X}; Y) - \frac{1}{\beta} H(Z) \\
&= I(\mathcal{X}; Y) - \frac{1}{\beta} I(\mathcal{X}; Z) \quad \text{(by Eq.~\ref{eq:det-eq}, $H(Z|\mathcal{X})=0$)} \\
&\geq I(\mathcal{X}; Y) - \frac{1}{\beta} I(\mathcal{X}; \mathcal{T}) \quad \text{(by data processing, $\mathcal{X}\to\mathcal{T}\to Z$)} \\
&\geq I(\mathcal{T}; Y) - \frac{1}{\beta} I(\mathcal{X}; \mathcal{T}) \quad \text{(by data processing, $\mathcal{X}\to\mathcal{T}\to Y$).}
\end{align}
We obtain:
$$I(\mathcal{T}; Y) - \frac{1}{\beta} I(\mathcal{X}; \mathcal{T}) \leq \mathcal{L}_{\text{GIB}}$$
which shows that the original IB objective is upper bounded by the proposed GIB under the stated assumption.
\end{proof}
\section{GIB and Sufficient Statistics}
\label{app:suf_stat}

In this section, we show that when the first term in the GIB objective is
$I(Z;Y)$, the GIB recovers sufficient statistics in the limit 
$\beta \to \infty$.

\begin{theorem}
For $\beta\in(0,\infty]$, consider the GIB functional
\[
\mathcal{J}_\beta(P_{Z|X})
:= I(Z;Y)
- \frac{1}{\beta\,2N}\sum_{i=1}^N 
\Big( I(X^{-i};Q) + I(X^{i};Q) \Big).
\]
Then at $\beta=\infty$,
\[
\sup_{P_{Z|X}} \mathcal{J}_\infty
= \sup_{P_{Z|X}} I(Z;Y)
\;\le\; H(Y).
\]
Moreover, equality holds (i.e., $\sup_{P_{Z|X}} I(Z;Y)=H(Y)$) if and only if
$Y$ is a deterministic function of $Z$, equivalently $H(Y\mid Z)=0$.
In this case, $Z$ is a sufficient statistic.
\end{theorem}

\begin{proof}
At $\beta=\infty$, the penalty term vanishes and therefore
$\mathcal{J}_\infty = I(Z;Y)$.  
By the elementary information inequality
$I(Z;Y) \le H(Y)$ for all encoders $P_{Z|X}$, we have
\[
\sup_{P_{Z|X}} \mathcal{J}_\infty
= \sup_{P_{Z|X}} I(Z;Y)
\le H(Y).
\]

If for some encoder we have $H(Y\mid Z)=0$, i.e.\ $Y$ is a deterministic 
function of $Z$, then
\[
I(Z;Y) = H(Y) - H(Y\mid Z) = H(Y),
\]
and the upper bound is achieved.

Conversely, if $I(Z;Y)=H(Y)$ for some encoder, then
\[
H(Y\mid Z) = H(Y) - I(Z;Y) = 0,
\]
so $Y$ must be a deterministic function of $Z$.

Thus, at $\beta=\infty$, maximising $\mathcal{J}_\infty$ recovers precisely
the set of encoders for which $Z$ is a sufficient statistic for $Y$.
\end{proof}

\section{When GIB Encounters Infinity}
\label{app:infinite_gib}
In this section, we analyze the conditions under which our GIB formulation 
yields infinite values and show that, unlike standard IB, these infinities 
have meaningful interpretations.

\begin{theorem}
Let $\mathcal{X} = (X^1,\dots,X^N)$ be a feature vector and let $Y$ be a continuous random variable.
Assume that the representation $Z$ achieves perfect prediction of $Y$ in the
sense that $I(Z;Y) = \infty.$ Assume further that no individual feature $X^i$ nor its complement
$\mathcal{X}^{-i}$ alone yields an infinite mutual information with the PMI
variable $Q$, in the sense that
\[
\frac{1}{N}\sum_{i=1}^{N} 
\Big(I(\mathcal{X}^{-i}; Q) + I(X^{i}; Q)\Big) 
< \infty,
\]
where $Q$ is the point-wise mutual information random variable.
Then $\mathcal{L}_{\mathrm{GIB}} = \infty$.
\end{theorem}

\begin{proof}
By definition,
\[
\mathcal{L}_{\mathrm{GIB}}
=
I(Z;Y)
- 
\frac{1}{2\beta N}\sum_{i=1}^{N} 
\Big(I(\mathcal{X}^{-i}; Q) + I(X^{i}; Q)\Big).
\]
By assumption, $I(Z;Y)=\infty$, while the sum is finite. Hence,
in the extended real line,
\[
\mathcal{L}_{\mathrm{GIB}} = \infty - \text{(finite)} = \infty.
\]
\end{proof}

\paragraph{Interpretation of Infinities in GIB.}

In contrast to the standard IB, where infinities arise as technical artifacts of 
deterministic mappings between continuous variables, the infinities in the GIB 
functional admit a more structural interpretation.

When $\mathcal{L}_{\mathrm{GIB}}=\infty$, we are in a regime of \emph{perfect 
synergy}: the output $Y$ can be recovered from the full feature set (via $Z$), 
yet no individual coordinate or coordinate subset carries enough information to 
reconstruct the relevant PMI signal. In other words, all features are jointly 
essential. This corresponds to the ideal synergistic regime.

Conversely, if some feature $X^{i}$ or complement $\mathcal{X}^{-i}$ can alone 
perfectly determine $Y$, then the corresponding term $I(X^{i}; Q)$ or 
$I(\mathcal{X}^{-i}; Q)$ becomes infinite, making the entire penalty infinite. 
In this case, the GIB takes the form
\[
\infty - \infty,
\]
which is \emph{indeterminate}.  
This suggests that prediction is not synergistic, as the output can be reconstructed from a sufficiently informative subset of the features.

Thus, while both IB and GIB encounter infinities in the continuous setting, the 
GIB admits a meaningful interpretation: an infinite value corresponds precisely 
to perfect synergy, whereas the indeterminate case reflects the absence of 
synergy.

\section{Experimental Settings}

\subsection{Synthetic Synergy Experiment}
\label{app:synthetic}

\paragraph{Data Generation.} For each input dimension $n \in \{2, ..., 10\}$, we generate $N = 10^6$ samples. Each sample consists of a binary input vector $X \in \{0,1\}^n$ with i.i.d. Bernoulli(0.5) entries. We apply a ``force-to-1'' noise model: with probability $p_{\text{flip}} = 1/3$, we randomly select one coordinate $i \sim \text{Uniform}\{1,...,n\}$ and set $X'_i = 1$, leaving all other coordinates unchanged. With probability $2/3$, no modification is made ($X' = X$). The noise pattern is encoded as $\varepsilon \in \{0, 1, ..., n\}$, where 0 indicates no flip and $i > 0$ indicates coordinate $i$ was forced to 1.

\paragraph{Functions.} We examine three deterministic functions of increasing synergy applied to the noisy input $X'$:
\begin{itemize}
    \item Non-synergistic: $f_1(X') = X'_1$ (output depends only on first input)
    \item Partially synergistic: $f_2(X') = X'_1 \oplus X'_2$ (XOR of first two inputs)
    \item Highly synergistic: $f_3(X') = \bigoplus_{i=1}^n X'_i$ (XOR of all inputs)
\end{itemize}

\paragraph{MI Estimation.} Since all variables are discrete, we compute exact MI using empirical probability distributions with base-2 logarithms. 

\subsection{CIFAR-10 Synergy with Augmentation}
\label{app:cifar_synergy}

\paragraph{Architecture and Training.} We train ResNet models of depths \{20, 32, 44, 56, 68, 80, 92, 110\} on CIFAR-10. Each architecture follows the standard ResNet design for CIFAR with initial $3\times3$ convolution, three residual stages, global average pooling, and a final 10-way linear classifier. Models are trained with SGD (learning rate 0.1, momentum 0.9, weight decay $5 \times 10^{-4}$) for 200 epochs with batch size 256. Learning rate is reduced by a factor of 0.1 at epochs 100 and 150 using MultiStepLR scheduler. Standard data augmentation consists of random crops ($32\times32$ with padding 4) and horizontal flips applied during training.

\paragraph{Teacher-Student Framework.} To quantify synergy with augmentation noise, we employ a two-stage approach. First, a teacher network is trained as described above on augmented data. After training, we collect the teacher's softmax outputs on the augmented training set. We then train a student network of identical architecture to predict these teacher outputs from non-augmented inputs. The student is trained for 200 epochs using the same SGD configuration (lr=0.1, momentum=0.9, weight decay=$5 \times 10^{-4}$) with MultiStepLR milestones at epochs 100 and 150. The student minimizes cross-entropy loss between its predictions and the teacher's softmax targets.

\paragraph{Synergy Measurement.} We compute the marginal entropy of teacher predictions as $H(Y) = -\mathbb{E}[(p_{\text{teacher}} \log p_{\text{teacher}})]$ where the expectation is over all augmented training samples. The conditional entropy is estimated as the final cross-entropy loss achieved by the converged student model. The synergy ratio $I(f(X,\varepsilon); \varepsilon|X) / I(f(X,\varepsilon); X,\varepsilon)$ (which can be re-written as the final loss of the student divided by the total entropy) quantifies the proportion of the teacher's output entropy that cannot be predicted from clean images alone, requiring knowledge of the augmentation pattern.

\subsection{Simple Functions}
\label{app:simple_functions}

\paragraph{Network Architecture.} All networks consist of a single hidden layer with specified units, followed by a linear output layer. No bias terms, regularization, or normalization are used. Weights are initialized using PyTorch's default settings, namely Kaiming uniform for the hidden layers and uniform initialization for the output layer.

\paragraph{Target Functions and Architectures.} The target functions and architectures considered in our evaluation are the following:
\begin{itemize}
    \item Addition: $f(a,b) = a + b$, 2 inputs $\rightarrow$ 4 hidden units (identity activation) $\rightarrow$ 1 output;
    \item Multiplication: $f(a,b) = a \times b$, 2 inputs $\rightarrow$ 3 hidden units (square activation: $x^2$) $\rightarrow$ 1 output;
    \item Symmetric polynomial 1 (f1): $f(a,b,c) = ab + bc + ca$, 3 inputs $\rightarrow$ 16 hidden units (square activation: $x^2$) $\rightarrow$ 1 output;
    \item Symmetric polynomial 2 (f2): $f(a,b,c) = a^2 + b^2 + c^2$, 3 inputs $\rightarrow$ 8 hidden units (square activation: $x^2$) $\rightarrow$ 1 output;
    \item Symmetric polynomial 3 (f3): $f(a,b,c,d) = ab + bc + cd + da$, 4 inputs $\rightarrow$ 16 hidden units (square activation: $x^2$) $\rightarrow$ 1 output.
\end{itemize}

\paragraph{Training Details.} Networks are trained with standard gradient descent (no momentum) with learning rate 0.01 for 1000 epochs, minimizing mean squared error (MSE) loss. Training data consists of 1500 samples uniformly sampled from $[-10, 10]^n$ for all functions except addition, which uses $[0, 10]^2$. Test data uses 1500 samples from the extended range $[-1000, 1000]^n$ to evaluate extrapolation. MI is computed every 10 epochs using histogram binning with 40 bins. We get our binning estimation technique from \cite{saxe2018}.

\subsection{ResNets on CIFAR-10}
\label{app:resnets}

\paragraph{Architecture.} We use standard ResNet architectures for CIFAR-10 with depths \{20, 56, 80, 110\}, implemented with BasicBlocks. Each network has an initial 16-filter $3\times3$ convolution, followed by three stages with \{16, 32, 64\} filters respectively, global average pooling, and a 10-way linear classifier.

\paragraph{Training.} Models are trained on CIFAR-10 (50k train, 10k test) with SGD (momentum 0.9, weight decay $5 \times 10^{-4}$, initial learning rate 0.1) for 200 epochs with batch size 128. Learning rate decays by 0.1 at epochs 100 and 150. Standard augmentation includes random crops ($32\times32$, padding 4) and horizontal flips. No additional preprocessing is applied beyond standard CIFAR normalization.

\paragraph{MI Estimation.} For standard IB, we compute MI between the 10-dimensional logit vector $\mathcal{T}$ (pre-softmax outputs) and targets $Y$. For GIB's input decomposition, we first apply Kernel PCA with RBF kernel (gamma=1/3072) to reduce the 3072-dimensional flattened images to 50 principal components. MI is computed at each epoch using the first 5000 training samples with histogram binning (30 bins). For IB: $I(\mathcal{T}; Y)$ using the 10-dimensional logits. For GIB: synergy decomposition using the 50 PCA components as features.

\subsection{BERT on AG News}
\label{app:transformers}

\paragraph{Model Configuration.} BERT-base-uncased (12 layers, 768 hidden dimensions, 12 attention heads) fine-tuned for 4-way AG News classification (World, Sports, Business, Sci/Tech). The dataset contains 120,000 training and 7,600 test examples. Maximum sequence length is 128 tokens with padding.

\paragraph{Training Protocols.} The training protocols used in our evaluation are the following:
\begin{itemize}
    \item \textbf{Standard Fine-tuning:} Direct fine-tuning from pre-trained BERT weights for 3 epochs.
    \item \textbf{Unlearning + Fine-tuning:} 3 epochs of training with randomly shuffled labels (maintaining class balance), followed by 3 epochs of standard fine-tuning.
\end{itemize}

\paragraph{Optimization.} Both protocols use AdamW optimizer with learning rate $2 \times 10^{-5}$ and weight decay 0.01, batch size 32. No learning rate warmup or scheduling is applied. Training uses cross-entropy loss over the 4 classes.

\paragraph{MI Computation.} MI is computed 24 times per epoch (approximately every 200 batches) using 5000 training samples. For standard IB, we use the 4-dimensional logit vector $\mathcal{T}$ from the classification head. For GIB, we use the raw 128-dimensional token ID sequences as input features $\mathcal{X}$ (no PCA is applied). MI estimation uses histogram binning with 30 bins.

\subsection{Adversarial Robustness}
\label{app:adversarial}

\paragraph{Architecture.} 4-layer fully-connected network: $784 \rightarrow 1024 \rightarrow 20 \rightarrow 20 \rightarrow 20 \rightarrow 10$, with \textit{tanh} activations after each hidden layer and softmax output. 

\paragraph{Adversarial Training.} FGSM attacks are applied to every training example in each batch: $x_{adv} = x + \epsilon \cdot \text{sign}(\nabla_x \mathcal{L}(f(x), y))$ where $\epsilon \in \{0.01, 0.1, 1.0\}$. Perturbed inputs are clipped to [0,1]. The training loss is the average of clean and adversarial losses: $\mathcal{L} = (\mathcal{L}_{clean} + \mathcal{L}_{adv})/2$. No validation set or early stopping is used.

\paragraph{Training Details.} Networks are trained for 10,000 epochs using Adam optimizer with learning rate $10^{-3}$. MI is computed every 250 epochs between inputs and the final 20-dimensional hidden layer activations using histogram binning (30 bins).

\section{Alternative MI Estimation Using GCMI and KDE}
\label{app:alt_estimation}
To validate the robustness of our findings, we repeated our experiments using two alternative MI estimation methods: Gaussian Copula Mutual Information (GCMI) and Kernel Density Estimation (KDE). These methods offer different trade-offs between computational efficiency and estimation accuracy compared to our primary histogram binning approach.

\subsection{Kernel Density Estimation (KDE)}

KDE \citep{parzen1962estimation} estimates probability densities using kernel functions centered at each data point, then computes MI from these continuous density estimates. We use Gaussian kernels with bandwidth selected via Scott's rule \citep{scott1992multivariate}. While computationally more intensive than GCMI ($\mathcal{O}(n^2)$ for $n$ samples), KDE provides non-parametric estimates that can capture arbitrary distribution shapes without assuming specific parametric forms. This flexibility makes KDE particularly suitable for complex, multi-modal distributions that might arise in neural network representations.

\subsubsection{Activation Function Comparison}

\begin{figure}[th]
\centering
\includegraphics[width=\textwidth]{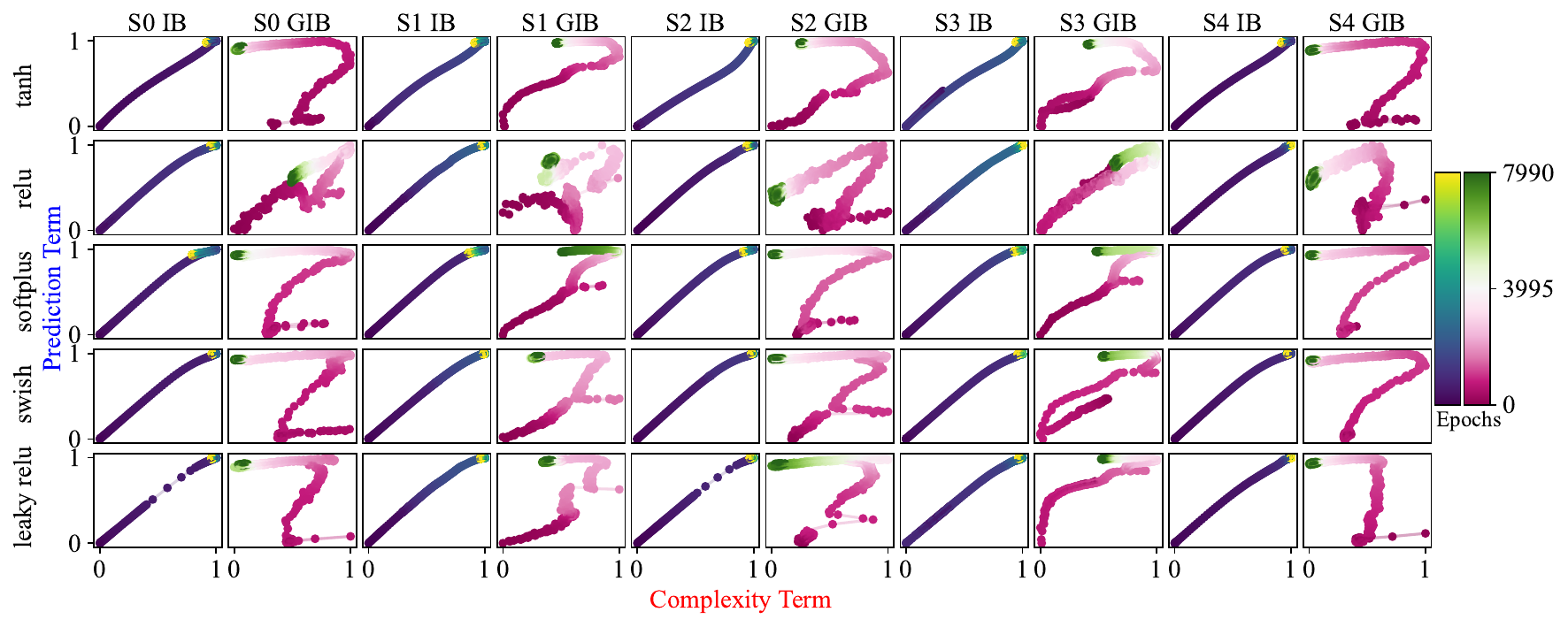}
\caption{Information plane dynamics across activation functions using KDE estimation. The non-parametric nature of KDE reveals fine-grained dynamics in the information plane trajectories.}
\label{fig:activations_kde}
\end{figure}

Figure \ref{fig:activations_kde} displays information plane dynamics using KDE estimation. The GIB formulation (pink) again shows compression phases across all activation functions. Standard IB (blue) fails to show any compression. 

\subsubsection{Simple Mathematical Functions}

\begin{figure}[th]
\centering
\includegraphics[width=\textwidth]{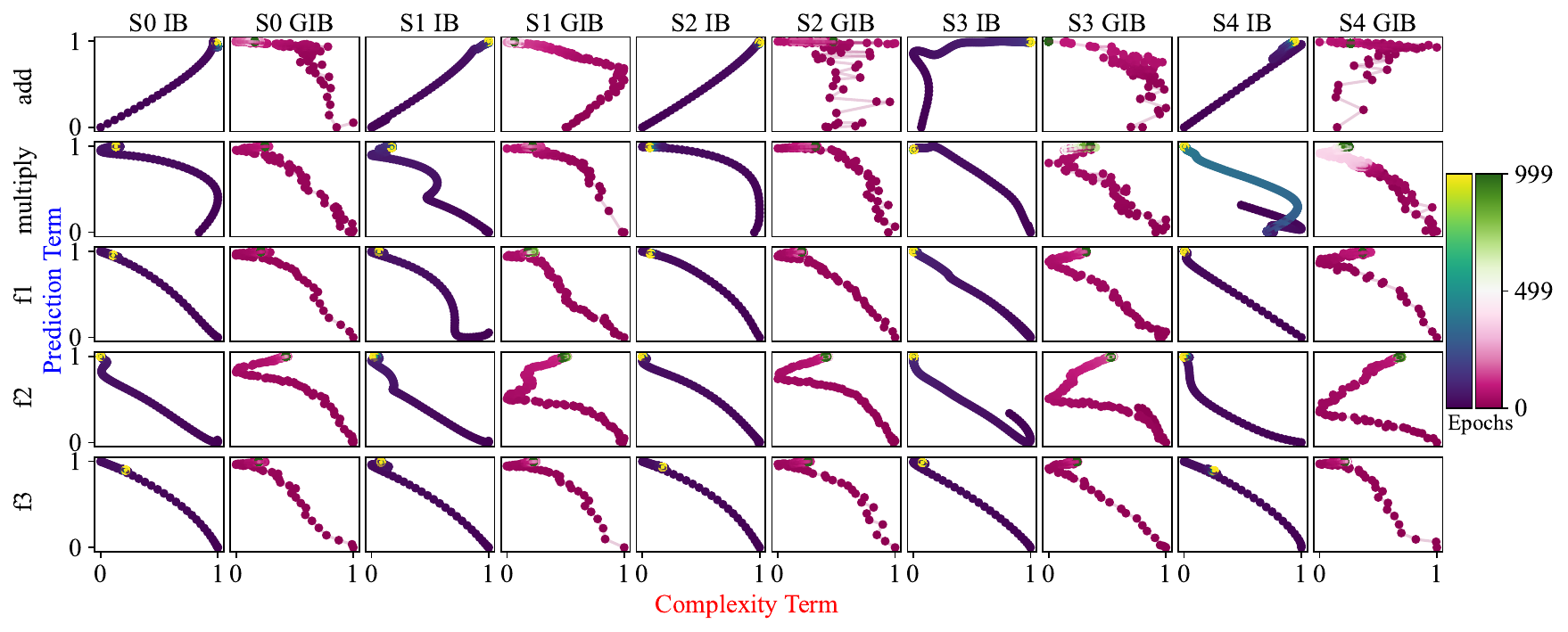}
\caption{Information plane dynamics for simple mathematical functions using KDE estimation. The method provides sharp phase transitions despite increased trajectory variance.}
\label{fig:simple_kde}
\end{figure}

Figure \ref{fig:simple_kde} shows KDE-based MI estimation for simple function learning. The GIB formulation exhibits clear compression phases for all functions. Although, this can also be accompanied by a stage of decompression. Standard IB shows more erratic behavior, generally moves upward and leftward but in less distinct phases. 

\subsubsection{ResNet Information Dynamics}

\begin{figure}[th]
\centering
\includegraphics[width=\textwidth]{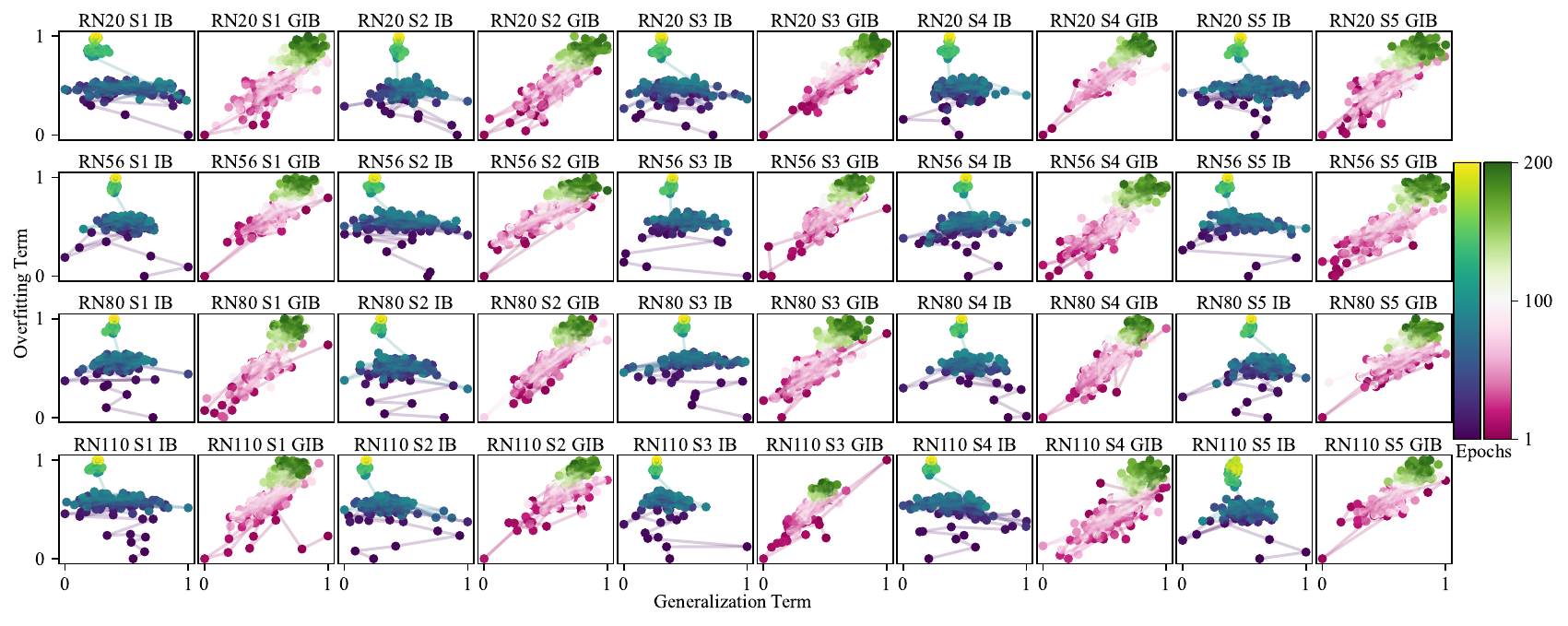}
\caption{Information plane dynamics for ResNets on CIFAR-10 using KDE estimation.}
\label{fig:resnet_kde}
\end{figure}

Figure \ref{fig:resnet_kde} reveals the first significant divergence in interpretation between the GIB results obtained using binning versus KDE. Under KDE estimation, neither GIB nor IB exhibits interpretable information bottleneck dynamics.
\subsubsection{BERT Fine-tuning Dynamics}

\begin{figure}[th]
\centering
\includegraphics[width=\textwidth]{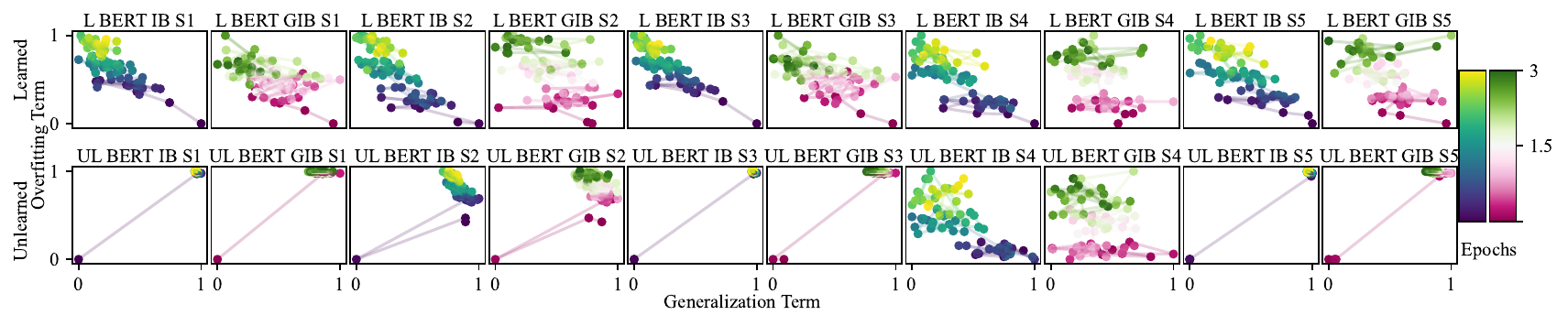}
\caption{Information plane dynamics for BERT fine-tuning using KDE estimation.}
\label{fig:bert_kde}
\end{figure}

In Figure \ref{fig:bert_kde}, our results realign with the interpretation presented in the main text. Following the unlearning phase, we observe a rapid initial fitting step succeeded by a prolonged, gradual (and less pronounced) compression phase. These dynamics are evident for both IB and GIB.

\subsection{Gaussian Copula Mutual Information (GCMI)}

GCMI \citep{ince2017statistical} estimates MI by first transforming variables to have standard Gaussian marginals using the Gaussian copula, then computing MI under the Gaussian assumption. This approach is particularly effective for continuous variables with complex, potentially non-linear relationships. The method is computationally efficient (O(n log n) for n samples) and provides robust estimates even for high-dimensional data. Unlike histogram binning, GCMI does not require discretization parameters and automatically adapts to the data distribution.

\subsubsection{Activation Function Comparison}

\begin{figure}[th]
\centering
\includegraphics[width=\textwidth]{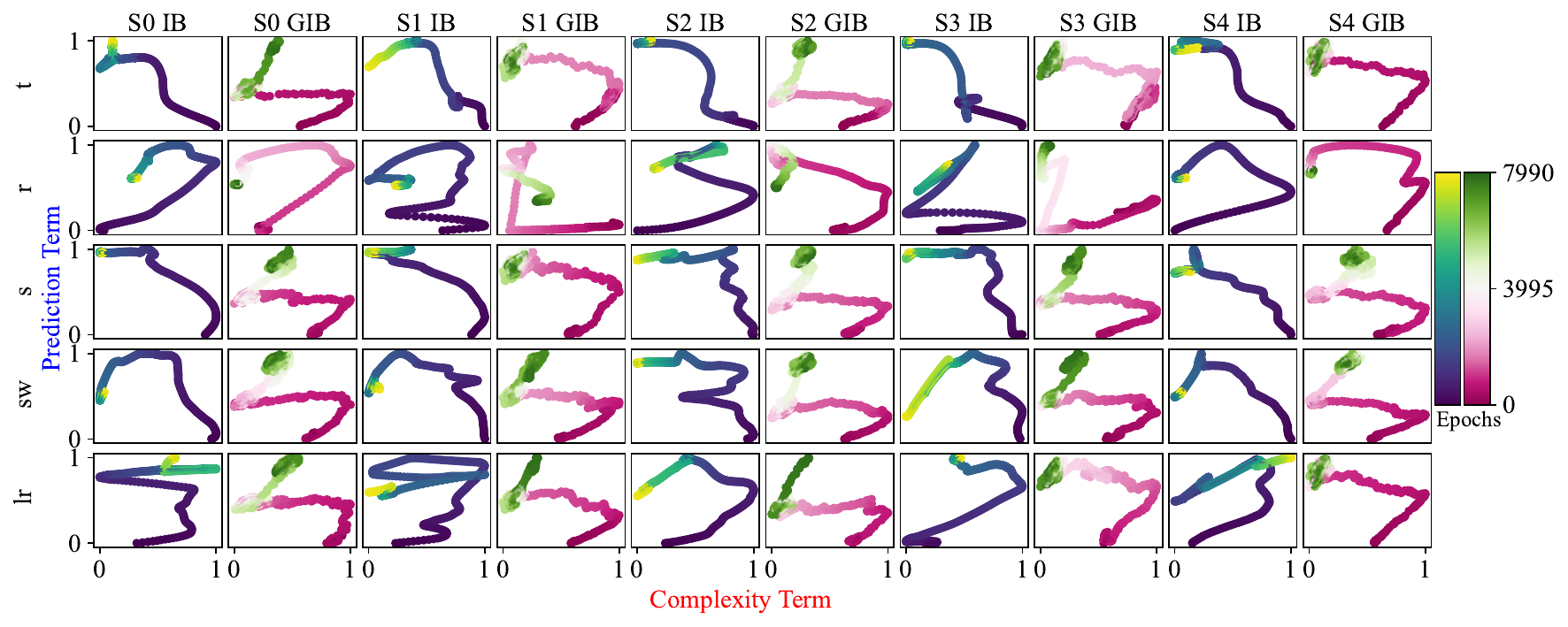}
\caption{Information plane dynamics across activation functions using GCMI estimation. Standard IB (blue) and GIB (pink) trajectories for networks trained on synthetic tasks S0-S4.}
\label{fig:activations_gcmi}
\end{figure}

Figure \ref{fig:activations_gcmi} presents information plane dynamics using GCMI estimation across five activation functions. Under this estimator, both methods frequently exhibit compression phases. However, GIB often displays a subsequent decompression phase, characterized by a rightward shift in the later stages of training.

\subsubsection{Simple Mathematical Functions}

\begin{figure}[th]
\centering
\includegraphics[width=\textwidth]{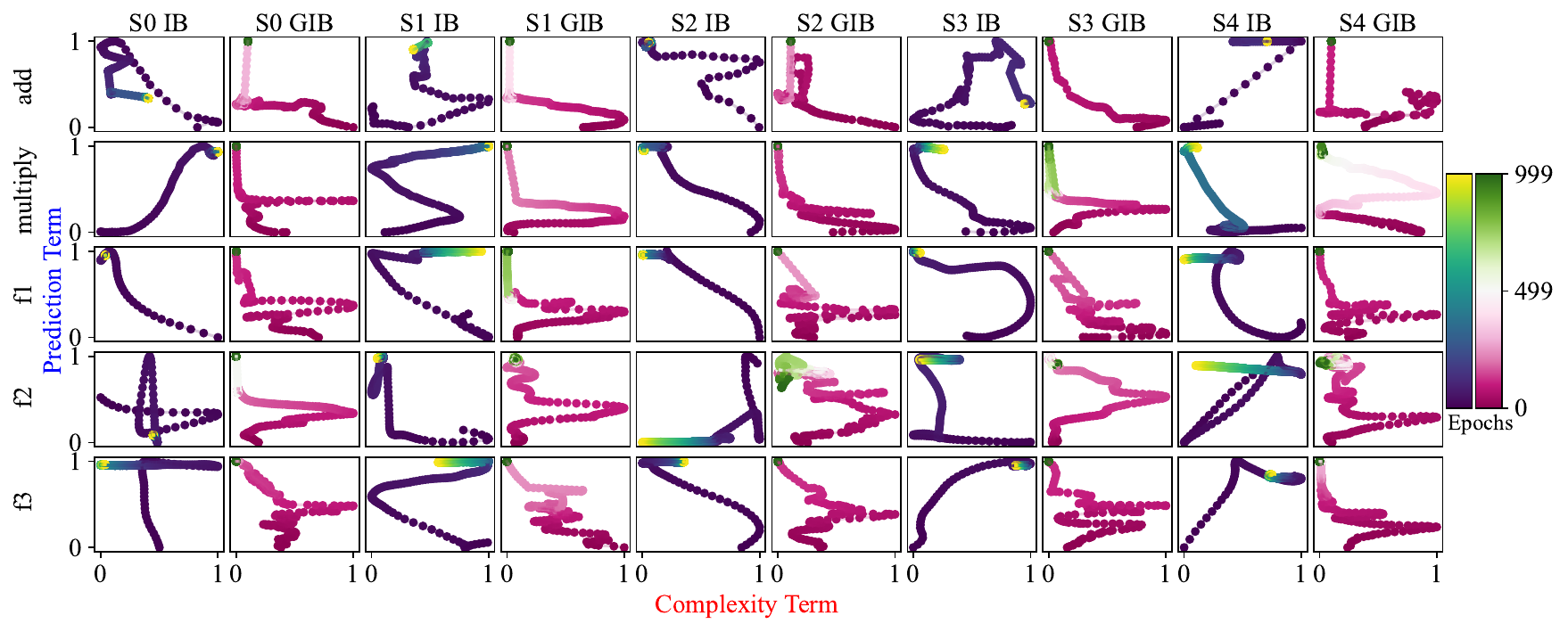}
\caption{Information plane dynamics for simple mathematical functions using GCMI estimation.}
\label{fig:simple_gcmi}
\end{figure}

Figure \ref{fig:simple_gcmi} presents results for networks learning arithmetic and polynomial functions using GCMI estimation. While the dynamics of both methods are not easily interpretable, the GIB formulation demonstrates upward leftward movement, indicating the occurrence of fitting and compression but they are occurring at once rather than in distinct phases.  In contrast, standard IB shows variable behavior with limited evidence of compression.

\subsubsection{ResNet Information Dynamics}

\begin{figure}[th]
\centering
\includegraphics[width=\textwidth]{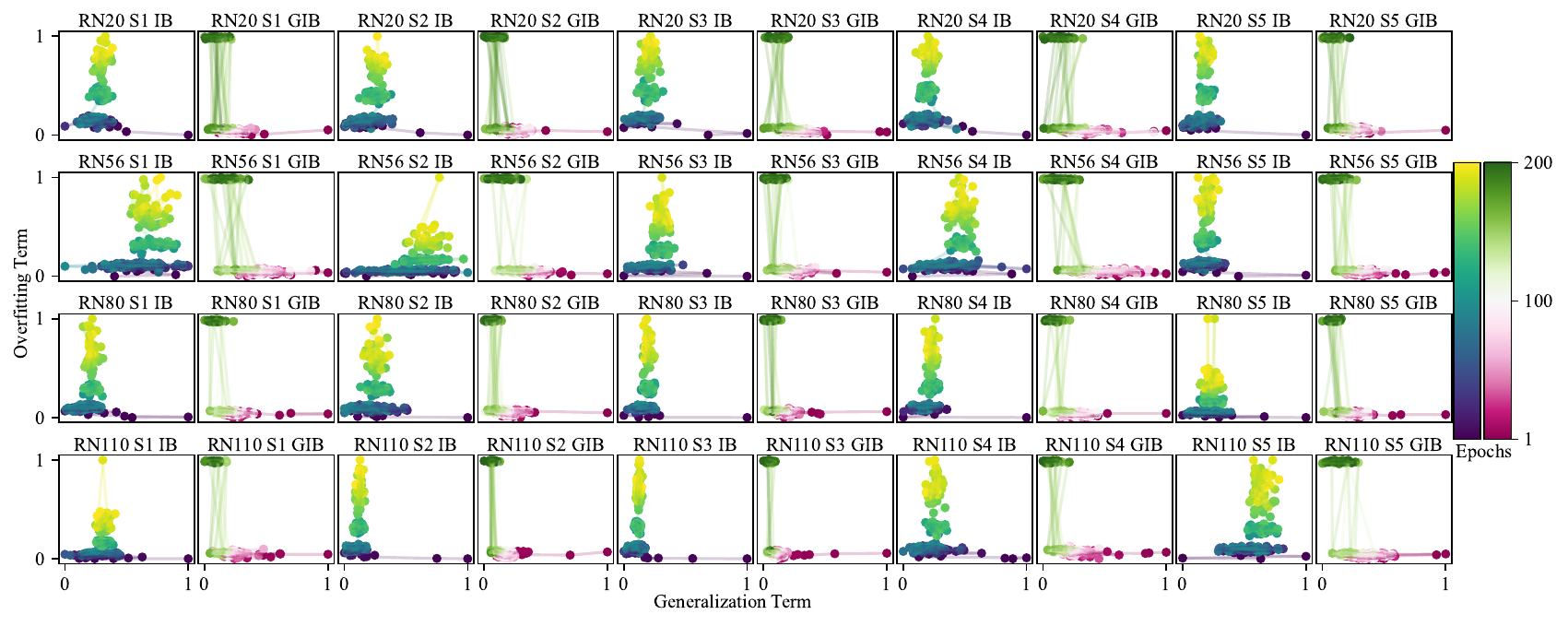}
\caption{Information plane dynamics for ResNets on CIFAR-10 using GCMI estimation.}
\label{fig:resnet_gcmi}
\end{figure}

In Figure \ref{fig:resnet_gcmi}, we observe that neither the IB nor the GIB yields interpretable results.

\subsubsection{BERT Fine-tuning Dynamics}

\begin{figure}[th]
\centering
\includegraphics[width=\textwidth]{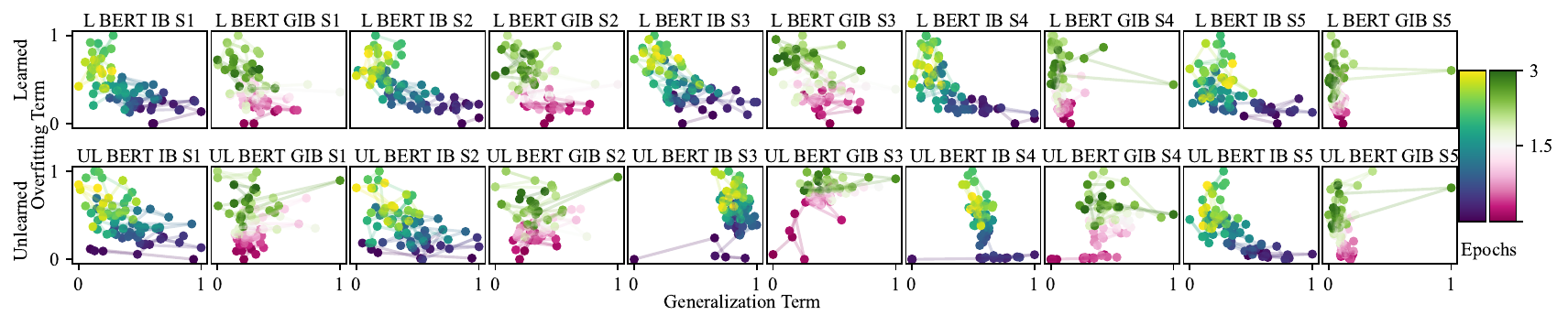}
\caption{Information plane dynamics for BERT fine-tuning using GCMI estimation.}
\label{fig:bert_gcmi}
\end{figure}

Figure \ref{fig:bert_gcmi} likewise shows that the dynamics produced by both the IB and GIB are not interpretable.

\subsection{Summary of GCMI and KDE Results}

In this section, KDE provides strong validation for our main findings. For the KDE method, our GIB formulation consistently shows compression phases where standard IB fails, particularly for different activation functions and simple arithmetic. Meanwhile, the effect is less pronounced for GCMI, but compression remains more likely than for the standard IB. This consistency across three fundamentally different MI estimation approaches (binning, GCMI, and KDE) strongly supports our theoretical framework. The fact that synergy-based decomposition reveals consistent information dynamics across estimation methods suggests that GIB captures a fundamental aspect of how neural networks process information during learning.

\section{Computational Complexity}
\label{app:comp_comp}
The computational requirements of GIB and IB differ significantly in their scaling behavior. If we define our unit of computation as a single MI estimation, GIB requires $2N+1$ calculations: more specifically, one for the prediction term $I(Z;Y)$ and $2N$ for the complexity term (computing $I(\mathcal{X}^{-i}; Q)$ and $I(X^i; Q)$ for each feature). Critically, these calculations occur at the input layer where dimensionality is typically highest, for CIFAR-10, this means 3072 features. However, GIB's computational cost is independent of network depth, since it only considers input-output relationships. In contrast, standard IB requires $2L$ MI calculations for $L$ layers, computing $I(\mathcal{X};\mathcal{T}_l)$ and $I(\mathcal{T}_l;Y)$ at each layer. While one might compute IB only for the final layer where dynamics are often most pronounced, this prevents the use of the IB as a tool for understanding learning dynamics throughout the network. Additionally, GIB benefits from a key advantage: we can apply PCA to high-dimensional inputs (as we do for CIFAR-10 in Section \ref{sec:planes_resnets}) because features at the input layer share a common representation space \citep{TurkPentland1991CVPR}. Conversely, combining representations across layers for IB is less conventional.

\section{Effect of PCA Dimensionality on GIB Dynamics}
\label{app:pca}
\begin{figure}[ht]
\centering
\includegraphics[width=\textwidth]{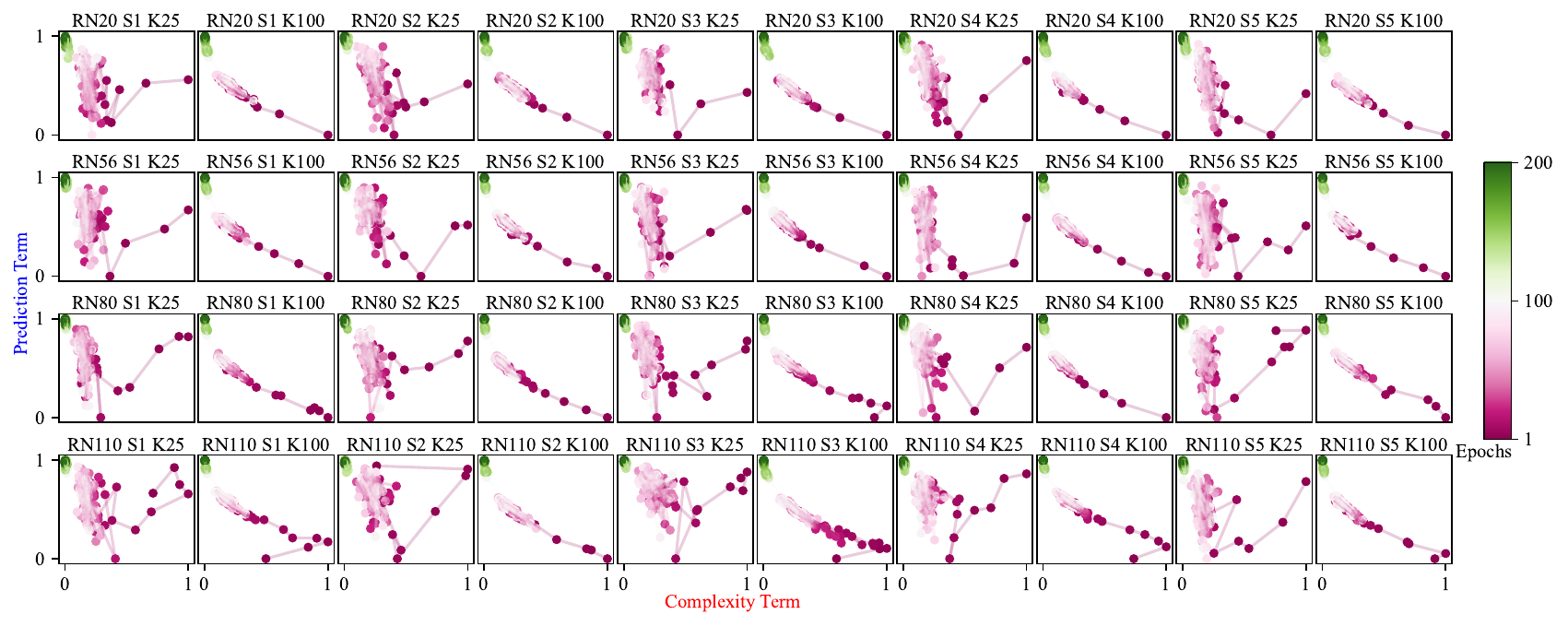}
\caption{GIB information plane dynamics for ResNets with varying PCA dimensionality. Each subplot shows results for KPCA with 25 (left) versus 100 (right) components (C's). As dimensionality increases, compression phases become more pronounced.}
\label{fig:kpca_comparison}
\end{figure}

Figure \ref{fig:kpca_comparison} demonstrates the impact of PCA dimensionality on observed GIB dynamics. With only 25 principal components, the dynamics are erratic and noisy. However, as we increase to 50 components (shown in main results) and then to 100 components, the dynamics become less noisy. This progression suggests that capturing synergistic information requires sufficient dimensionality to represent the complex feature interactions present in the original input space. 

\section{Sum versus Whole Synergy Formulation}
\label{app:alt_synergy}

\subsection{Sum-Versus-Whole Synergy}
\label{app:sum_whole}
In this section, we examine an alternative formulation of synergy based on sum versus whole synergy rather than our feature-wise approach. Due to the increased noise in this estimation method, all MI values are averaged over 50 iterations to obtain stable measurements.

This alternative definition of GIB is based on the sum-versus-whole formulation of synergy, which compares information available from the complete feature set against the sum of information from individual components \citep{schneidman2003synergy}. The basic form is  $\text{Syn}_{\text{GIB}}(\mathcal{X} \to Y) = I(\mathcal{X}; Y) - \sum_{i=1}^{N} \left( I(X^i; Y)\right)$, which considers only individual features. This captures the intuitive notion of synergy, for example, XOR has zero information from individual inputs but perfect information from their combination, yielding maximal synergy \citep{bell2003co}. Unlike exponentially complex PID-based measures \citep{williams2010nonnegative}, this formulation requires only $O(N)$ mutual information calculations, making it computationally feasible for tracking synergistic learning dynamics in high-dimensional neural networks . Combining this with our representation of the PMI-weighted combination of $Z$ and $Y$ and rewriting as a Lagrangian optimization we get the following:
\begin{equation}
\mathcal{L}_{\text{SVW}} = \max_{p(Z|X)} \left[ \underbrace{\textcolor{blue}{I(Z;Y)}}_{\textcolor{blue}{\text{prediction term}}} - \underbrace{\textcolor{red}{\beta^{-1} \sum_{i=1}^{N} I(X^i; Q)}}_{\textcolor{red}{\text{complexity term}}} \right]
\end{equation}
In the following section, we compare the outcomes of tracking this optimization with those obtained from the approach introduced in the main paper.

\subsection{Activation Function Comparison}

\begin{figure}[h]
\centering
\includegraphics[width=\textwidth]{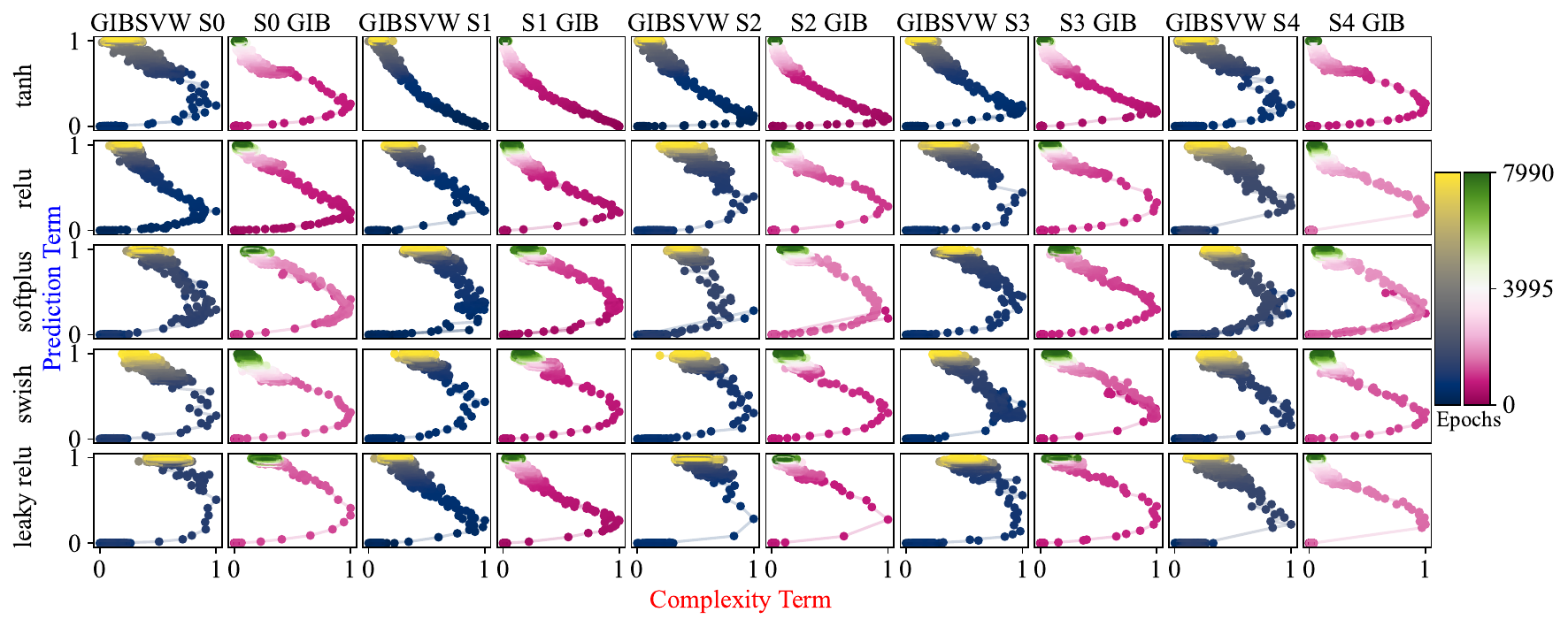}
\caption{Information plane dynamics comparing the alternative synergy bottleneck (SVW, blue) with our GIB (pink) across multiple activation functions. While SVW shows compression phases more frequently than standard IB, it exhibits less consistent compression than our feature-wise GIB formulation. MI values averaged over 50 iterations.}
\label{fig:activations_gibsvw}
\end{figure}

Figure \ref{fig:activations_gibsvw} shows that the alternative synergy formulation (SVW) improves upon standard IB by exhibiting compression phases in several cases where IB fails. However, the compression is less pronounced and less consistent across activation functions compared to our feature-wise GIB. This suggests that while any synergy-based decomposition provides benefits over treating the latent space as a black box, the specific choice of synergy might impact the observability of information dynamics.

\subsection{Simple Mathematical Functions}

\begin{figure}[h]
\centering
\includegraphics[width=\textwidth]{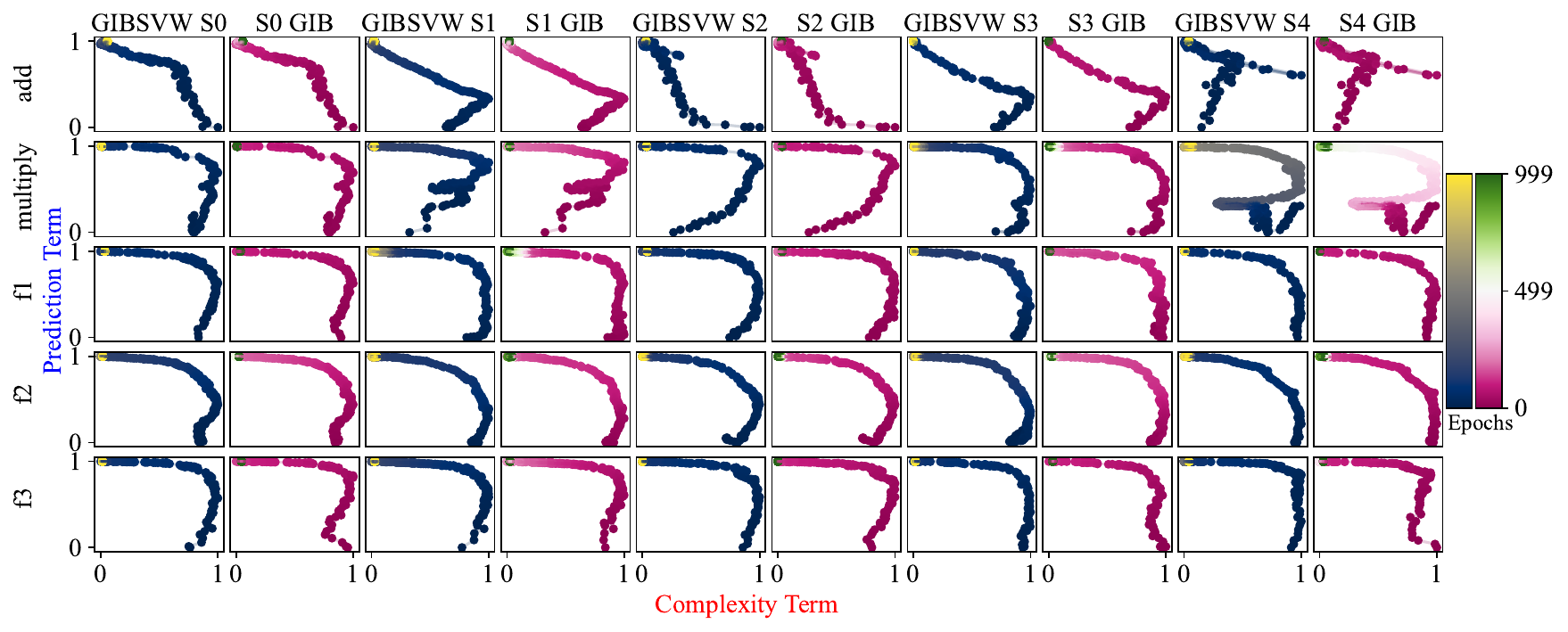}
\caption{Learning dynamics for simple mathematical functions. The alternative synergy bottleneck (SVW) consistently shows compression phases across all functions, significantly outperforming standard IB. }
\label{fig:simple_gibsvw}
\end{figure}

For NNs learning simple mathematical functions (Figure \ref{fig:simple_gibsvw}), the alternative synergy formulation consistently exhibits compression phases across all tasks. This represents a substantial improvement over standard IB, which shows no compression for these functions. 

\subsection{ResNet Information Dynamics}

\begin{figure}[h]
\centering
\includegraphics[width=\textwidth]{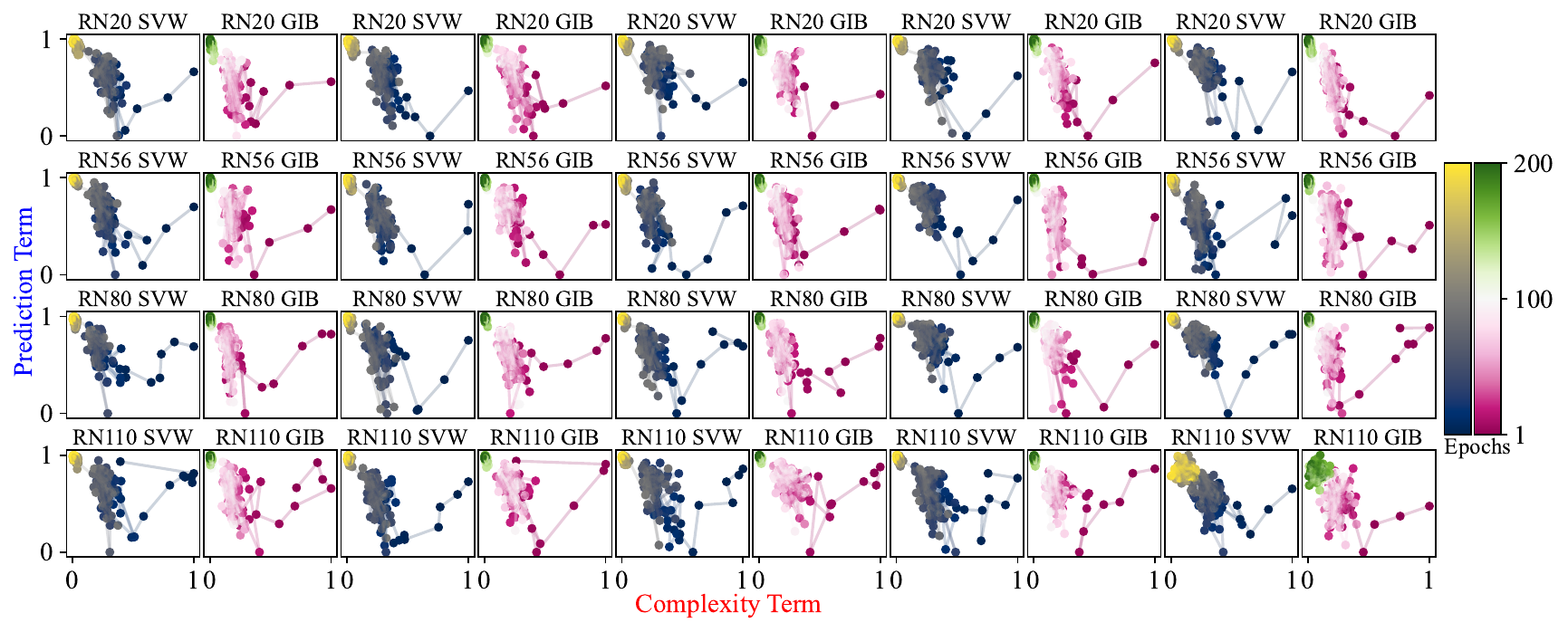}
\caption{ResNet information dynamics on CIFAR-10. The alternative synergy bottleneck (SVW) shows distinct compression and non-compression phases, providing clearer learning dynamics than standard approaches while exhibiting more variability than our feature-wise GIB.}
\label{fig:resnet_gibsvw}
\end{figure}

Figure \ref{fig:resnet_gibsvw} shows that the alternative synergy formulation reveals distinct phases in ResNet training.

\subsection{BERT Fine-tuning Dynamics}

\begin{figure}[h]
\centering
\includegraphics[width=\textwidth]{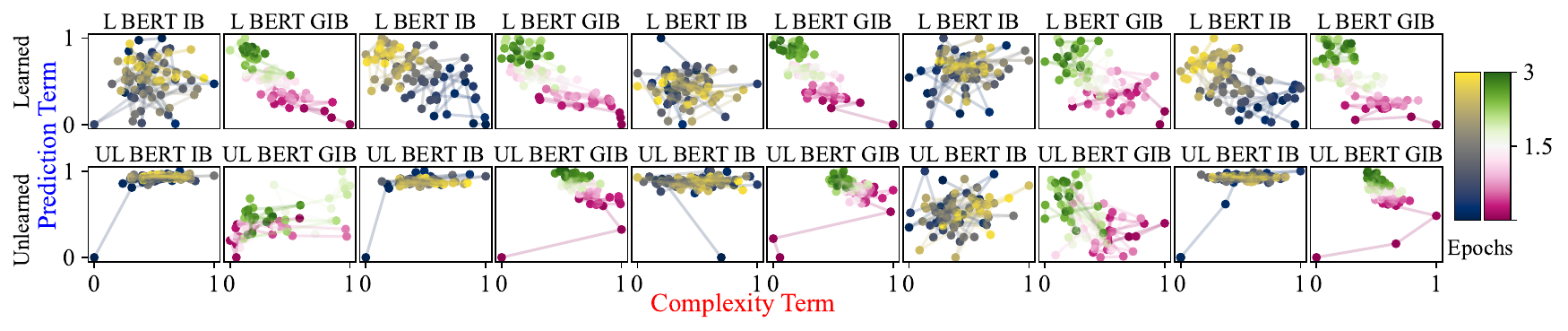}
\caption{BERT fine-tuning with the alternative synergy bottleneck. Unlike our GIB formulation, SVW fails to show compression phases for both standard fine-tuning and the unlearning protocol, suggesting inherent limitations in capturing synergistic dynamics in transformer architectures.}
\label{fig:bert_gibsvw}
\end{figure}

The first notable limitation of the alternative synergy formulation appears in transformer fine-tuning (Figure \ref{fig:bert_gibsvw}). The SVW method fails to exhibit compression phases for BERT on the AG News classification task, even after our unlearning intervention. In contrast, our GIB formulation clearly reveals compression dynamics. This discrepancy suggests that variance-weighted synergy measures may struggle to capture the high-dimensional, attention-based computations characteristic of transformers.

\subsection{Summary}

The alternative synergy formulation serves as a useful baseline, showing that synergy-based approaches generally outperform standard IB.

\end{document}